%% file: paper_aidata.tex
\documentclass{article}

\PassOptionsToPackage{numbers,sort&compress}{natbib}
\usepackage[preprint]{neurips_2026}
\makeatletter\renewcommand{\@noticestring}{}\makeatother 

\usepackage[utf8]{inputenc} 
\usepackage[T1]{fontenc}    
\usepackage{hyperref}       
\usepackage{url}            
\usepackage{booktabs}       
\usepackage{amsfonts}       
\usepackage{nicefrac}       
\usepackage{microtype}      
\usepackage{xcolor}         

\usepackage{graphicx}
\usepackage{float}          
\usepackage{listings}       

\usepackage{subcaption}
\usepackage{amsmath, amssymb, amsthm}
\usepackage{enumitem}
\usepackage{algorithm}
\usepackage{algpseudocode}
\usepackage{multirow}
\usepackage{makecell}
\usepackage{xspace}
\usepackage{tikz}
\usetikzlibrary{tikzmark}
\usepackage{eso-pic}        

\newtheorem{lemma}{Lemma}

\newtheorem{definition}{Definition}

\hypersetup{
    colorlinks,
    linkcolor={blue!80!black},
    citecolor={blue!80!black},
}

\AddToShipoutPictureFG*{%
  \AtPageUpperLeft{%
    \put(\LenToUnit{1.5in},\LenToUnit{-0.8in}){%
      \includegraphics[height=7mm,keepaspectratio]{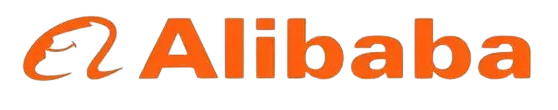}%
    }%
  }%
}

\title{OmniJudge or OmniBias? Diagnosing Multimodal Judges through Balanced, Decoupled Lenses}

\author{\mdseries
  Guangzheng Hu\textsuperscript{1,4} \quad
  Ziyue Jiang\textsuperscript{2,5} \quad
  Weixu Qiao\textsuperscript{1} \quad
  Lixin Zhang\textsuperscript{1} \quad
  Jianye Kang\textsuperscript{1} \\
  Yuru Wu\textsuperscript{1} \quad
  Rong Bao\textsuperscript{1} \quad
  Niantong Li\textsuperscript{1} \quad
  Wei Wang\textsuperscript{1} \quad
  Ziyi Cheng\textsuperscript{1} \\
  Xinfa Zhu\textsuperscript{2} \quad
  HangRui Hu\textsuperscript{1} \quad
  Ting He\textsuperscript{1} \quad
  Bing Zhao\textsuperscript{3} \quad
  Lin Qu\textsuperscript{1} \\
  Hu Wei\textsuperscript{1,\dag} \quad
  Jin Xu\textsuperscript{2,\ddag} \\[0.5em]
  \textsuperscript{1}Alibaba Group \quad
  \textsuperscript{2}Qwen Team \quad
  \textsuperscript{3}Alibaba DAMO Academy \\
  \textsuperscript{4}University of Melbourne \quad
  \textsuperscript{5}Zhejiang University \\[0.3em]
  \texttt{guangzhengh@student.unimelb.edu.au} \\
  \textsuperscript{\dag}\texttt{kongwang@alibaba-inc.com} \\
  \textsuperscript{\ddag}\texttt{renjun.xj@alibaba-inc.com} \\
}

\begin{document}

\maketitle

\vspace{-2em}
\begin{center}
  \href{https://github.com/SKYLENAGE-AI/D3OmniFramework}{\includegraphics[height=4mm]{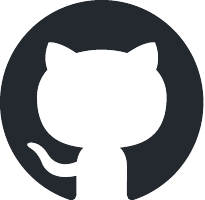}}\,
  \href{https://github.com/SKYLENAGE-AI/D3OmniFramework}{\texttt{https://github.com/SKYLENAGE-AI/D3OmniFramework}}
  \\
  \href{https://huggingface.co/datasets/skylenage-ai/D3OmniBench}{\includegraphics[height=4mm]{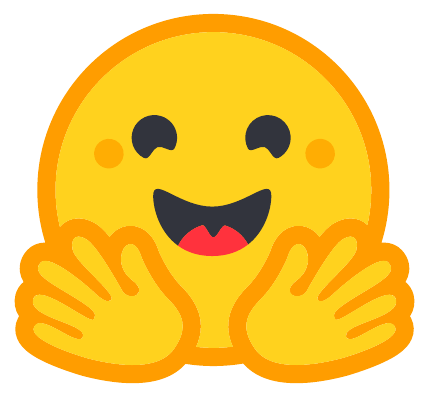}}\,
  \href{https://huggingface.co/datasets/skylenage-ai/D3OmniBench}{\texttt{https://huggingface.co/datasets/skylenage-ai/D3OmniBench}}
\end{center}
\vspace{-1em}

\begin{abstract}
Multimodal understanding models that can jointly judge text-to-image (T2I), text-to-video (T2V) and text-to-speech (TTS) generation are increasingly used as ``OmniJudges'' for evaluation and automatic annotation. How reliably they understand what they score remains unclear, since existing benchmarks and training data tend to overemphasize positive examples and to conflate distinct failure modes, so a judge may score well without recognizing failures while its capability gaps stay hidden. Motivated by this, we introduce \textbf{D\textsuperscript{3}-Omni}, a balanced and decoupled benchmark for diagnosing fine-grained multimodal understanding, covering $53$ orthogonal binary dimensions ($17$/$22$/$14$) and $10{,}671$ samples ($3{,}526$/$1{,}998$/$5{,}147$) across the three tasks. Rather than re-generating outputs, which may leak information across dimensions, we fix verified fully positive seeds and derive negatives through controlled prompt rewriting and atomic, dimension-isolating perturbations. The resulting \textbf{D\textsuperscript{3}} design is \textbf{Dual-balanced}, which helps alleviate negative-sample scarcity and per-dimension label imbalance; \textbf{Decoupled}, so that each error is attributable to a single capability; and \textbf{Dynamic}, steering construction toward under-represented regions of the label distribution as generative models improve. The suite reaches near $1{:}1$ per-dimension parity and a uniform distribution over all total-score levels. Under this balanced view, even strong OmniJudges tend to struggle on modality-related dimensions, to confirm satisfied requirements far more reliably than they detect violated ones, and to treat nominally distinct attributes as largely a single decision, suggesting that aggregate accuracy may hide systematic blind spots that a balanced and decoupled lens can help expose and, in turn, address.
\end{abstract}

\input{introduction}

\input{relatedwork}

\input{dimension}

\input{method}

\input{experiment}

\input{conclusion}

\bibliographystyle{unsrtnat}
\bibliography{main}

\appendix
\input{appendix}

\end{document}

%% file: introduction.tex
\section{Introduction}

Multimodal large language models (MLLMs) are rapidly evolving toward unified systems capable of understanding, reasoning, and generating across text, image, audio, video, and their combinations. Recent omni-modal benchmarks systematically assess these models' ability to understand and reason across visual, auditory, acoustic, and textual inputs~\cite{li2026omnibench,zhang2025omnieval}. Beyond serving as response generators, MLLMs are increasingly deployed as automatic annotators, preference judges, reward models, and distillation teachers; when tasked with cross-task scoring across T2I, T2V, and TTS, they are commonly referred to as ``OmniJudges''. However, the evaluation of these models in such roles, particularly as judge models (JMs) or reward models (RMs), remains considerably less developed than the assessment of their generation capabilities. Given their expanding real-world applications, addressing this evaluation gap has become increasingly critical.

\begin{figure}[!t]
\centering
\includegraphics[width=\textwidth]{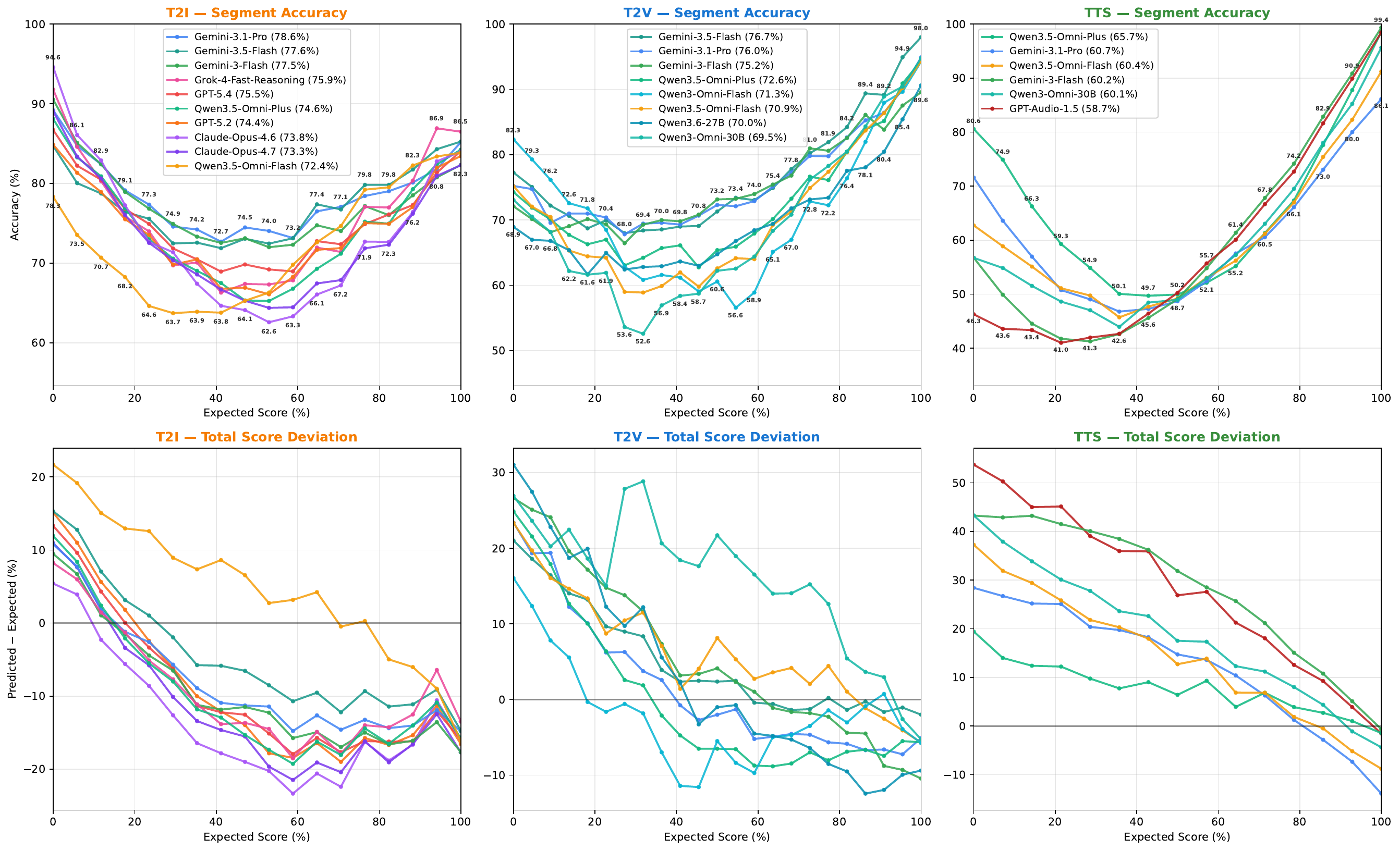}
\caption{\textbf{Top:} per-segment accuracy over the full ground-truth
score range; every judge dips in the mid-score regime, forming a U.
\textbf{Bottom:} total-score deviation (predicted $-$ expected):
positive $=$ Yes-bias, negative $=$ No-bias, zero $=$ calibrated.}
\label{fig:multi_total}
\end{figure}

Recent efforts evaluate LLM-as-a-judge systems, reward models, multimodal reward/preference models, and omni-modal reward models through dedicated benchmarks and preference datasets~\cite{zheng2023judging,lambert2025rewardbench,yasunaga2025multimodal,xiong2025llava,jin2025omni}. Many multimodal generation and evaluation benchmarks also rely on closed-source frontier models such as GPT-4o and Gemini for automated scoring or preference annotation~\cite{xiong2025llava,wijaya2024multimodal}. Although these protocols streamline evaluation, they reveal an important limitation: existing judge and reward benchmarks are often not explicitly designed around distributional balance. Samples are typically skewed across modalities, tasks, score levels, and evaluation criteria, with certain score ranges or quality categories represented disproportionately. Such distributional biases obscure whether high judging performance reflects genuine fine-grained multimodal understanding or simply memorization of dataset priors and majority-class patterns.

The need for balance is sharpened by a fundamental asymmetry between generation and judging. Real-world generations are predominantly acceptable outputs---catastrophic failures are, by construction, the minority---yet a judge is valuable precisely in the opposite regime, where it needs to catch the occasional error, the fine-grained mismatch, and the subtle factual or perceptual flaw inside an otherwise plausible sample. On a high-quality-dominated, imbalanced suite a judge can thus obtain a high score while remaining blind to the very failure modes that motivate its deployment: when $90\%$ of a dimension's samples are positive, a constant ``Yes'' already reaches $90\%$ accuracy without any genuine discrimination. A diagnostic omni-modal judge benchmark therefore needs two kinds of balance---score-level balance across quality intervals and dimension-level positive/negative parity---together with an evaluation taxonomy that is as decoupled as possible, so that failures can be attributed to specific abilities rather than conflated~\cite{chen2026advancing}.

Another challenge is the cost of benchmark construction. Although human annotation is important for reliability, constructing and updating a balanced and decoupled omni-modal benchmark entirely through manual labeling is expensive and inefficient. The cost becomes even higher when balance is required simultaneously across modalities, score levels, and fine-grained evaluation dimensions. This is particularly limiting in the Omni-LLM setting, where model capabilities evolve quickly and static benchmarks may soon lose their discriminative power. Therefore, instead of relying on exhaustive human annotation, benchmark construction should shift human effort toward targeted verification and quality control, while using automatic sample construction, model-assisted filtering, and consistency checking to support scalable data generation and dynamic benchmark updates~\cite{xiong2025llava,wijaya2024multimodal}.

Balanced test sets are, in fact, standard practice elsewhere: in long-tailed and imbalanced learning, models trained on skewed data are routinely evaluated on class-balanced test sets, precisely because accuracy is only meaningful once the label distribution is controlled. Why, then, does no explicitly balanced omni-modal judge benchmark yet exist? The obstacle is not a lack of motivation but the difficulty of construction: the quality of generated media is not directly controllable, the natural prevalence of each fine-grained dimension is intrinsically imbalanced, and building decoupled negatives that isolate a single dimension demands precise, controllable sample construction that ordinary data collection cannot provide. Breaking through this barrier is the core enabler of our work: rather than chasing controllable generations, we reverse-construct the prompt from a verified positive seed and apply controllable, dimension-isolating rewriting, coupled with a dynamic dual-balancing loop, which finally makes a balanced and decoupled evaluation suite feasible and exposes the capability blind spots that skewed benchmarks had kept hidden. Figure~\ref{fig:multi_total} previews one such blind spot: on our balanced suite every judge's per-segment accuracy collapses on the mixed-quality middle of the score range---a U-shaped dip that any aggregate number conceals.

To address these challenges, we propose \textbf{D\textsuperscript{3}-Omni}, a balanced and decoupled benchmark for diagnosing the fine-grained multimodal understanding of Omni-LLMs as JMs and RMs. Unlike existing multimodal benchmarks that mainly emphasize broad modality coverage or aggregate preference accuracy (e.g., \cite{lambert2025rewardbench,jin2025omni}), \textbf{D\textsuperscript{3}-Omni} focuses on distributional balance, decoupled assessment, and scalable construction. Specifically, it is designed to reduce bias induced by skewed score distributions, disentangle different judging dimensions for fine-grained diagnosis, and support low-human-cost benchmark construction and dynamic updates as Omni-LLM capabilities continue to evolve. In summary, \textbf{D\textsuperscript{3}-Omni} represents a paradigm shift from asking ``how well does the judge score'' to asking what the judge truly understands, and, equally importantly, what it fails to understand and evaluate.

\paragraph{Contributions.}
\textbf{Benchmark.} We introduce \textbf{D\textsuperscript{3}-Omni}, to the best of our knowledge the first balanced and decoupled omni-modal judge/reward benchmark for diagnosing the fine-grained multimodal understanding of Omni-LLMs, covering T2I, T2V and TTS with $10{,}671$ samples.
\textbf{Framework.} We propose the \textbf{D\textsuperscript{3}} framework---Dual-balanced, Decoupled and Dynamic---which controls the total-score distribution and the per-dimension positive/negative ratio jointly, reaching near $1\!:\!1$ parity on every dimension and a uniform distribution over all score levels.
\textbf{Taxonomy.} We design a decoupled taxonomy of $53$ orthogonal binary dimensions ($17$ for T2I, $22$ for T2V, $14$ for TTS) that separates prompt-related compliance (instruction following, attribute binding, spatial and temporal reasoning, text typesetting, audio-text alignment) from modality-related perceptual fidelity (visual realism, anatomical coherence, temporal stability, audio quality, speaker characteristics), so that each error is attributable to one capability.
\textbf{Pipeline.} We develop a low-human-cost, dynamically updatable construction pipeline combining automatic sample synthesis, model-assisted filtering, consistency checking and targeted human verification, allowing the benchmark to be extended as Omni-LLM capabilities evolve.
\textbf{Diagnosis.} Using \textbf{D\textsuperscript{3}-Omni} we surface blind spots that aggregate accuracy hides and that recur across model families: a U-shaped competence collapse on mixed-quality samples, a pervasive Yes-bias that widens from vision to speech, and pseudo-decoupling in which nominally orthogonal decisions collapse onto a single latent factor. Each shortcoming maps back to a concrete \textbf{D\textsuperscript{3}} operator, turning diagnosis into actionable data-side interventions; and all of it is legible only on a score- and dimension-balanced suite, since reading a single high-score segment alone would already reorder the leaderboard.

%% file: relatedwork.tex
\section{Related Work}

\subsection{Benchmarks for omni-modal generative tasks.}
A large body of work evaluates the generation quality of individual modalities: for text-to-image, compositional benchmarks such as T2I-CompBench~\citep{huang2023t2icompbench} and GenAI-Bench~\citep{lin2024genaibench} probe object presence, attribute binding, and spatial relations; for text-to-video, VBench~\citep{huang2024vbench}, EvalCrafter~\citep{liu2024evalcrafter}, and T2VScore~\citep{wu2024t2vscore} cover text-video alignment, visual and motion quality, and temporal consistency; and for text-to-speech, automatic evaluators predict perceptual quality, naturalness, intelligibility, and speaker similarity~\citep{lo2019mosnet,saeki2022utmos,maiti2023speechlmscore}. These benchmarks measure how well generators behave, but they treat the underlying evaluator as a trusted black box, leaving untested whether the evaluator itself reliably understands the fine-grained perceptual, alignment, and instruction-following properties it is asked to score. This gap motivates a closer look at the judge and reward models used for generative evaluation.

\subsection{Judge and reward models for generative content.}
LLM-as-a-judge has become a scalable alternative to human evaluation~\citep{zheng2023judging}, with text judges such as PandaLM, JudgeLM, Prometheus, and CompassJudger performing pairwise comparison, scalar scoring, and rubric-based critique~\citep{wang2024pandalm,zhu2023judgelm,li2023generative,kim2023prometheus,cao2024compassjudger}. In the generative-media domain, judges and reward models remain largely modality-specific: image reward models such as ImageReward, PickScore, and HPSv2 learn human preferences over generated images~\citep{xu2023imagereward,kirstain2023pick,wu2023human,zhang2024learning}, while speech and video rely on learned quality and alignment metrics~\citep{lo2019mosnet,saeki2022utmos,maiti2023speechlmscore,huang2024vbench,liu2024evalcrafter,wu2024t2vscore}. In parallel, generic omni-modal MLLMs are increasingly used as default judges across all three tasks---closed frontier systems (GPT-4o, GPT-5, Gemini, Claude Opus, Grok)~\citep{hurst2024gpt,openai2025gpt5,team2024gemini,google2025gemini3,claude35sonnet,anthropic2025claude4} and open omni-LLMs (Qwen3-Omni, MiniCPM-o, Mini-Omni2, Baichuan-Omni)~\citep{xu2025qwen3,cui2026minicpm,xie2024mini,li2024baichuan}. Yet none has been independently verified as a unified judge applying the same fine-grained criteria across T2I, T2V, and TTS: modality-specific evaluators approximate human preference scores and generic MLLMs are optimized for general task-solving, so in neither case is the judge itself tested for the perceptual, alignment, and instruction-following understanding that reliable evaluation requires. This motivates benchmarking the judges directly.

\subsection{Benchmarks for evaluating judge models.}
A growing line of work probes judge reliability. Text-only judge benchmarks test whether LLM judges can compare, score, or rank outputs across dialogue, instruction following, reasoning, coding, and safety~\citep{zheng2023judging,tan2024judgebench,lambert2025rewardbench,son2024mm}, revealing biases such as position and verbosity bias and weak sensitivity to factual correctness~\citep{zheng2023judging,tan2024judgebench,lambert2025rewardbench}. Recent multimodal and omni-modal benchmarks extend this to vision-language and any-modality judging~\citep{chen2024mllm,yasunaga2025multimodal,xiong2025llava,jin2025omni,pu2025judge,chen2026advancing,li2026omnibench,zhang2025omnieval}. Three limitations persist. First, they concentrate on text-only or vision-language settings, leaving speech and video judgment under-studied. Second, they score the final judgment directly without disentangling the underlying sub-abilities, so a failure cannot be attributed cleanly to perceptual misreading, prompt-alignment failure, or criterion misinterpretation. Third, their label distributions are inherited from naturally collected outputs and rarely controlled, so high accuracy may reflect label priors rather than understanding---which turns attention to how the labels themselves are distributed.

\subsection{Imbalanced label distribution undermines diagnostic capacity.}
Because most judge benchmarks are built from naturally collected outputs or human preferences~\citep{zheng2023judging,lambert2025rewardbench,chen2024mllm,pu2025judge}, the positive and negative labels within a fine-grained dimension (e.g., object presence, speaker emotion, temporal consistency, prompt faithfulness) are typically skewed. A judge can then post high accuracy by exploiting label priors---indeed a trivial majority-class baseline already does---so aggregate metrics fail to surface real capability gaps and cross-dimension comparisons become unreliable. The generation-judging asymmetry compounds this: real generations are themselves skewed toward acceptable samples, so a benchmark that mirrors this distribution leaves precisely the regime where a judge is most needed---catching the occasional error and the subtle perceptual or factual flaw---essentially untested. A diagnostic omni-modal judge benchmark therefore needs explicitly balanced per-dimension labels across text-to-speech, text-to-image, and text-to-video, so that it measures whether judges recognize both the presence and the absence of each fine-grained property rather than dataset bias.

%% file: dimension.tex
\section{Benchmark Dimension Design}
\label{sec:dimensions}

We aim to evaluate whether multimodal understanding models---so-called ``OmniJudges''---can make accurate, fine-grained, and \emph{decoupled} judgments about generated content. To this end, each benchmark instance is formalized as a triplet \((p, x, \mathbf{y})\), where \(p \in \mathcal{P}\) is an input prompt, \(x \in \mathcal{X}\) is a generated modality (image, video, or speech), and \(\mathbf{y} \in \{0,1\}^D\) is a ground-truth binary label vector. A judge model \(f\) predicts \(\hat{\mathbf{y}} = f(p, x)\), and its performance is measured by comparing \(\hat{\mathbf{y}}\) to \(\mathbf{y}\).

\subsection{Motivation}
A judge that produces only an aggregate quality or preference score reveals \emph{nothing} about which sub-ability is failing---exactly the diagnostic gap motivated in our introduction and related-work discussion. Yet judging generated content fundamentally requires two mechanistically distinct abilities: interpreting natural-language intent and perceiving low-level signal integrity. The first is a language-grounding problem operating jointly over \((p, x)\); the second is a perceptual-fidelity problem operating purely over \(x\). Folding them into a single rubric makes failure modes inseparable---a model that perfectly understands intent but cannot detect visual artifacts is, under aggregate scoring, indistinguishable from one with the opposite weakness. We therefore decompose every judgment into two disjoint categories, \emph{Prompt-related} (\(\mathcal{D}_p\)) and \emph{Modality-related} (\(\mathcal{D}_m\)), and inside each category we further refine the requirement into the smallest atomic dimension on which a controlled perturbation can flip the label \emph{without} affecting any other dimension. This recursive decoupling is the structural prerequisite for the targeted, dimension-isolating negative-sample construction in Section~\ref{sec:method}.

\subsection{Formalization and Two-Category Decomposition}
\label{subsec:dim_formalization}

Each dimension \(d \in \{1, \dots, D\}\) corresponds to a binary requirement of the form:  
``Does the generated output satisfy this specific requirement?''  
with answer \(y_d = 1\) (\texttt{Yes}) if satisfied and \(y_d = 0\) (\texttt{No}) otherwise. This formulation ensures judgments are unambiguous and directly attributable to specific capabilities.

The two categories are formally defined as follows.

\paragraph{Prompt-related dimensions (\(\mathcal{D}_p\)).}
These assess semantic alignment between the prompt and the output. They encompass requirements concerning subject identity, spatial and relational structure, compositional logic, stylistic intent, and adherence to explicit or implicit constraints in the prompt. For these dimensions, the ground-truth judgment depends on the joint interpretation of \(p\) and \(x\).

\paragraph{Modality-related dimensions (\(\mathcal{D}_m\)).}
These evaluate intrinsic properties of the generated signal that are independent of the prompt. They cover perceptual and structural qualities such as visual or audio fidelity, temporal coherence, physical plausibility, and low-level artifact presence. Here, the judgment is determined solely by \(x\), as it pertains to the modality's internal consistency and realism.

This categorization enables disentanglement of two fundamental judgment capabilities: understanding user intent versus perceiving signal integrity. Diagnosing which category exhibits systematic errors reveals whether a model's limitations lie in language grounding or perceptual robustness; for example, a judge that scores high on \(\mathcal{D}_p\) but consistently fails on \(\mathcal{D}_m\) is one that follows the rubric semantically yet remains blind to subtle perceptual flaws---an extremely common pattern that we document empirically in Section~\ref{sec:experiments}.

To support meaningful fine-grained analysis, all dimensions are designed to be \emph{orthogonal}: the satisfaction of any requirement should not depend on the state of others. For instance, whether an image correctly depicts the requested color (attribute binding) should be independent of whether the objects are in the correct spatial arrangement (spatial reasoning); a sample is allowed to fail on color while passing on layout, enabling precise localization of the deficit. The same orthogonality principle is applied \emph{recursively} to the fine-grained sub-dimensions inside each category: each requirement is the smallest atomic unit whose label can be flipped through a single controlled perturbation \emph{without} altering the label of any other dimension. This ensures that a failure on one dimension reflects a localized judgment deficit rather than a side effect of correlated attributes, and it is precisely what makes targeted negative-sample construction tractable downstream.

The total score is defined as the integer sum:
\[
s(\mathbf{y}) = \sum_{d=1}^D y_d \in \{0, 1, \dots, D\},
\]
which counts the number of satisfied requirements. Due to orthogonality, deviations from the maximum score can be precisely attributed to specific violated dimensions.

\subsection{Granularity and Coverage}
\label{subsec:dim_granularity}

Most existing automatic evaluators for generative media collapse ``quality'' or ``alignment'' into one or a handful of coarse metrics---FID, CLIP-similarity, MOS, single-axis preference scores, or a small set of generic Likert categories---which neither pinpoint specific failure types nor cover the breadth of attributes a modern OmniJudge is expected to inspect. Our taxonomy goes substantially finer and substantially broader: 17, 22, and 14 atomic binary requirements for T2I, T2V, and TTS respectively, totalling \(D = 53\) decoupled dimensions. Critically, every fine-grained sub-dimension is selected to target a specific failure mode that frontier generators \emph{still} occasionally exhibit despite producing visibly high-quality outputs overall---e.g., subtle text-rendering glitches and finger-anatomy errors in T2I, audio--video desynchronization and inter-frame flicker in T2V, or mismatched speaker personality and unstable volume in TTS. These rare-but-critical errors are exactly the cases a deployed judge is expected to detect reliably, yet they are also precisely the cases for which naturally collected data offers very few negative examples. Designing each requirement as small, atomic, and mutually orthogonal is therefore not a stylistic preference but a functional necessity: only such a taxonomy can isolate, balance, and diagnose each rare failure mode in turn, and only such a taxonomy admits dimension-specific negative samples whose construction does not leak into other dimensions.

\subsection{Instantiation Across Three Tasks}
\label{subsec:dim_instantiation}

We instantiate this design across three generation scenarios; the complete dimension lists, definitions, and category assignments are provided in Tables~\ref{tab:t2i_dimensions}, \ref{tab:t2v_dimensions}, and~\ref{tab:tts_dimensions} in the appendix.

\paragraph{Text-to-Image (T2I).} \(D = 17\): 14 prompt-related dimensions covering composition, attribute binding, spatial reasoning, typography, and negative-instruction handling, plus 3 modality-related dimensions evaluating material realism, edge clarity, and anatomical coherence; the latter three being precisely the residual perceptual flaws that even state-of-the-art diffusion models still leak through on otherwise impressive renderings.

\paragraph{Text-to-Video (T2V).} \(D = 22\): 16 prompt-related dimensions spanning subject, scene, lighting, style, and audio alignment, plus 6 modality-related dimensions targeting temporal stability, focal sharpness, motion rhythm, audio quality, and audio--video synchronization, where even strong T2V models still produce occasional glitches that aggregate quality scores routinely overlook.

\paragraph{Text-to-Speech (TTS).} \(D = 14\): 10 prompt-related dimensions covering textual and punctuation faithfulness together with eight separately-judged speaker characteristics (age, gender, personality, timbre, speed, pitch, tone, emotion), plus 4 modality-related dimensions on voice clarity, background noise, volume stability, and spectral integrity. This is a fine-grained decoupling of speaker attributes that current TTS systems often blur into a single ``looks-good'' voice.

However, real-world data lacks the balance and isolation required to evaluate this taxonomy reliably: outputs are skewed toward high scores, single-dimension negative examples are scarce, and conventional negative synthesis often violates orthogonality through unintended cross-dimensional effects. To address this, we introduce a dynamic dual-balanced construction framework (Section~\ref{sec:method}) that generates triplets \((p, x, \mathbf{y})\) with dimension-isolated negatives and near-exact balance across both per-dimension labels and total score levels.

%% file: method.tex
\section{Balanced Benchmark Construction}
\label{sec:method}

As established in Section~\ref{sec:dimensions}, reliable fine-grained
diagnosis of an omni-modal judge requires per-dimension label balance
and score-level uniformity over a taxonomy whose sub-dimensions
are mutually orthogonal. Achieving this in practice, however, is far
from trivial: directly resampling new outputs from a perturbed prompt
unavoidably introduces \emph{cross-dimensional leakage}: even minor
edits (e.g., ``running'' $\rightarrow$ ``walking'') drift the pose,
motion, background, or rendering quality, simultaneously corrupting
several supposedly independent dimensions and destroying the
orthogonality on which fine-grained attribution depends.

We therefore propose \textbf{D\textsuperscript{3}-Construction}, a
benchmark-construction pipeline organized around three pillars whose
names (Decoupling, Dual-Balancing, and Dynamic) all begin with the
letter D and together operationalise the design objectives of
Section~\ref{sec:dimensions}. Decoupling
(\S\ref{subsec:method_decoupling}) makes every generated sample
atomically attributable to a single dimension by fixing a fully
positive seed and altering only one factor at a time;
Dual-Balancing (\S\ref{subsec:method_dual_balancing}) gives the
precise mathematical objective that the resulting benchmark must
satisfy and proves it is realizable; Dynamic
(\S\ref{subsec:method_dynamic}) provides the iterative procedure that
actually drives any partial benchmark towards that objective.

\paragraph{Notation.}
Let $\tau \in \{\textsc{T2I}, \textsc{T2V}, \textsc{TTS}\}$ denote the
generation task, $\mathcal{D}_p^{(\tau)}$ and $\mathcal{D}_m^{(\tau)}$
the disjoint sets of prompt-related and modality-related
sub-dimensions, and
$D_\tau = |\mathcal{D}_p^{(\tau)}| + |\mathcal{D}_m^{(\tau)}|$ their
total size. The catalogues defined in Section~\ref{sec:dimensions}
yield $D_{\textsc{T2I}}\!=\!17$, $D_{\textsc{T2V}}\!=\!22$, and
$D_{\textsc{TTS}}\!=\!14$. Whenever the task is unambiguous we drop
the superscript $(\tau)$. Each benchmark example is a triple
\begin{equation}
\bigl(p,\, x,\, \mathbf{y}\bigr)\;\in\;\mathcal{P} \times \mathcal{X}_\tau \times \{0,1\}^{D_\tau},
\label{eq:benchmark_triple}
\end{equation}
where $p$ is a natural-language prompt, $x$ a generated modality
artifact (image, video, or speech waveform), and
$\mathbf{y} = (y_1, \dots, y_{D_\tau})$ a binary label vector aligned
positionally with the rubric in Section~\ref{sec:dimensions}:
$y_d = 1$ ($\textsc{Yes}$) iff $(p,x)$ satisfies the $d$-th rubric
requirement, and $y_d = 0$ ($\textsc{No}$) otherwise. The total score
of a sample is
$s(p,x,\mathbf{y}) = \sum_{d=1}^{D_\tau} y_d \in \{0, 1, \dots, D_\tau\}$,
and we write $s(\mathbf{y})$ when $(p,x)$ are clear from context.

\subsection{Decoupling: Atomic, Single-Dimension Negative Construction}
\label{subsec:method_decoupling}

\paragraph{Why decouple at the construction step.}
Aggregate quality on its own conveys nothing about which sub-ability
is failing; pinpointing the failing dimension requires negatives whose
violation is, by construction, attributable to one and only one
$d^*$. We obtain such atomically-attributable negatives by
decoupling modality generation from semantic mismatch creation:
rather than re-synthesizing a new modality, we fix a fully positive
seed $(p^+, x^+)$ and inject a single, dimension-isolated edit on
either the prompt side (for $d^*\!\in\!\mathcal{D}_p$) or the
modality side (for $d^*\!\in\!\mathcal{D}_m$).

\begin{definition}[Fully positive seed]
A pair $(p^+, x^+)$ is a \emph{fully positive seed} for task $\tau$ iff
its label vector under the rubric of Section~\ref{sec:dimensions} is
$\mathbf{y}^+ = \mathbf{1}_{D_\tau}$, i.e.\ every prompt-related and
every modality-related dimension is satisfied. We denote the set of
such seeds by
\[
\mathcal{S}_{\text{full}}^{(\tau)} \;=\; \bigl\{(p^+, x^+) : \mathbf{y}(p^+, x^+) = \mathbf{1}_{D_\tau}\bigr\}.
\]
\end{definition}

\begin{definition}[Single-dimension flip operator]
For a target dimension $d^* \in \mathcal{D}_p \cup \mathcal{D}_m$, a
\emph{single-dimension flip operator}
$\mathcal{F}_{d^*}\!:\!\mathcal{P}\!\times\!\mathcal{X}\!\to\!\mathcal{P}\!\times\!\mathcal{X}$ is any map satisfying
\begin{equation}
\mathbf{y}\bigl(\mathcal{F}_{d^*}(p^+,x^+)\bigr)_{d}
=
\begin{cases}
0, & d = d^*,\\
1, & d \in \{1,\dots,D_\tau\}\setminus\{d^*\},
\end{cases}
\quad \forall (p^+,x^+) \in \mathcal{S}_{\text{full}}.
\label{eq:flip_operator}
\end{equation}
$\mathcal{F}_{d^*}$ is \emph{prompt-side} if it modifies only $p^+$ and
\emph{modality-side} if it modifies only $x^+$.
\end{definition}

\paragraph{Step~1: Seed prompt construction with a tri-LLM modality-aware expert pipeline.}
A correct negative is meaningful only on top of a verifiably correct
seed. We therefore build $\mathcal{S}_{\text{full}}$ through a
modality-aware inverse-prompting procedure executed independently on
three large language models (the TTS / T2V / T2I captioning experts
implemented respectively as Qwen-3.5 Plus, Gemini-3.1 Flash, and
Gemini-3.1 Pro), each conditioned on a modality-specific inverse
template that explicitly enumerates every prompt-related sub-dimension
$d \in \mathcal{D}_p^{(\tau)}$ from Section~\ref{sec:dimensions}. The
three captions are reconciled by a human-in-the-loop calibration pass
that resolves disagreements and rewrites the seed prompt $p^+$ until
every sub-dimension $d$ in $\mathcal{D}_p$ is grounded by a literal,
modality-faithful requirement (e.g., for TTS the prompt explicitly specifies rate, volume, timbre, emotion, etc., matching what
the audio actually exhibits). Concretely, given the raw modality $x$,
\begin{equation}
p^+(x) \;=\; \textsc{Calibrate}\bigl(\,\textsc{Reconcile}(c_1, c_2, c_3),\; \mathcal{D}_p^{(\tau)},\; x\bigr),\quad c_i = \mathrm{LLM}_i^{(\tau)}\!\bigl(x;\,T^{(\tau)}_{\text{inv}}\bigr),
\end{equation}
where $T^{(\tau)}_{\text{inv}}$ is the modality-specific inverse
template. Modality-related labels for the same $x$ are then verified
by the same LLM ensemble (with conflicts again adjudicated by a human
auditor); we deliberately keep this verification of $\mathcal{D}_m$ manual
because a mis-certified seed would silently propagate as a false
negative once perturbed. Only samples whose certified labels equal
$\mathbf{1}_{D_\tau}$ enter $\mathcal{S}_{\text{full}}$.

\paragraph{Step~2: Prompt-side flips with graded counter-semantic rewriting.}
For $d^* \in \mathcal{D}_p$ we instantiate $\mathcal{F}_{d^*}$ as a
controlled prompt rewrite executed by Gemini-3.1 Pro, governed by:
(i)~\emph{minimal modification} (alter only tokens directly tied to
$d^*$); (ii)~\emph{dimensional isolation} (assert all other tokens that
ground dimensions $d \neq d^*$ remain unchanged); and
(iii)~\emph{structural coherence} (preserve fluency). To stress-test
judges across the full counter-semantic spectrum, the rewriter does
not flip in a single fixed direction; it samples a semantic-distance
level $\ell \in \mathcal{L}_{d^*}$ and emits a $p^-$ at that level.
For example, when the seed TTS prompt requires ``very fast''
speaking rate ($d^* = $ rate), the rewriter randomly draws among
$\ell\!\in\!\{\text{moderate-opposite: medium},\,\text{strong-opposite: slow},\,\text{maximal-opposite: very slow}\}$,
all of which still violate the rate dimension but at different
degrees of semantic divergence, ensuring that the resulting negatives
span a wide range of prompt--modality semantic distance and prevent
judges from over-fitting to one particular flip template. Formally,
the prompt-side flip is realized at a level $\ell$ drawn uniformly from
$\mathcal{L}_{d^*}$, giving $(p^-,x^+) = \mathcal{F}_{d^*}^{(\ell)}(p^+,x^+)$,
and the resulting negative example is
$(p^-, x^+, \mathbf{y}^{(d^*)})$ with $\mathbf{y}^{(d^*)}$ obtained
from $\mathbf{1}_{D_\tau}$ by setting the $d^*$-th coordinate to $0$.

\paragraph{Step~3: Modality-side flips via human-verified atomic operators.}
For $d^* \in \mathcal{D}_m$ a prompt edit cannot induce the failure;
we instead apply a modality-specific atomic operator
$\mathcal{O}_{d^*}\!:\!\mathcal{X}_\tau \to \mathcal{X}_\tau$ to the
seed modality $x^+$ (the prompt $p^+$ is held fixed). Each $x^+$
entering this stage has been manually inspected, ensuring that the
post-perturbation defect is the only reason a modality-related
dimension would flip. At a high level, the operators target three
families of perceptual defects: spatial / textural distortions
(e.g.\ frequency-domain blurring, anatomical deformation,
segmentation-driven warping), temporal / synchronization
perturbations (e.g.\ frame-order shuffling, $2\!\times$ speed masking,
audio--video offset), and audio-fidelity manipulations (e.g.\
spectrogram-domain bilateral filtering, additive reverberation,
gain modulation, band-targeted frequency manipulation). The full
list of atomic operators used in this work---6 for T2V, 3 for T2I,
and 4 for TTS---together with their implementation details is
provided in Appendix~\ref{app:atomic_operators}. For every
$(d^*, \tau)$ the corresponding operator $\mathcal{O}_{d^*}^{(\tau)}$
has been validated to leave all $d \neq d^*$ unchanged on a held-out
audit set, so that
$\tilde{x} = \mathcal{O}_{d^*}^{(\tau)}(x^+)$ paired with the
unaltered $p^+$ produces the desired triple
$(p^+, \tilde{x}, \mathbf{y}^{(d^*)})$ in
Eq.~\eqref{eq:benchmark_triple}.

\paragraph{Diagnostic metric for decoupling: the dimension-coupling matrix.}
To verify that the decoupling holds end-to-end (both in the rubric
itself and in any judge's predictions), we report the empirical
dimension-coupling matrix
$\Sigma \in \mathbb{R}^{D_\tau \times D_\tau}$ whose $(d, d')$ entry is
the Pearson correlation of the $\textsc{Yes}/\textsc{No}$ indicators
across the benchmark:
\begin{equation}
\Sigma_{d,d'} \;=\; \frac{\sum_{i=1}^{N}(y_{i,d} - \bar y_d)(y_{i,d'} - \bar y_{d'})}{\sqrt{\sum_{i=1}^{N}(y_{i,d} - \bar y_d)^2}\;\sqrt{\sum_{i=1}^{N}(y_{i,d'} - \bar y_{d'})^2}},
\qquad \bar y_d = \frac{1}{N}\sum_{i=1}^{N} y_{i,d}.
\label{eq:coupling}
\end{equation}
We compute $\Sigma$ on (a) the ground-truth labels (the reference
matrix) and on (b) each judge's predicted labels; substantial
off-diagonal mass in (b) that is absent from (a) is direct evidence
of spurious coupling, i.e., the judge cannot truly separate the
underlying capabilities, which is one of the four diagnostic axes
evaluated in Section~\ref{sec:experiments}.

\subsection{Dual-Balancing: Definition and Existence Theorem}
\label{subsec:method_dual_balancing}

\paragraph{The two balance constraints.}
A benchmark $\mathcal{B} = \{(p_i, x_i, \mathbf{y}_i)\}_{i=1}^{N}$ is
\emph{dually balanced} iff it is uniform along the score-segment axis
and along the per-dimension label axis. Writing $N = M(D_\tau + 1)$
with $M \in \mathbb{N}^+$ samples per score level, we require
\begin{align}
\bigl|\{i : s(\mathbf{y}_i) = s\}\bigr| &= M, &&\forall s \in \{0,1,\dots,D_\tau\}, \tag{Score uniformity}\label{eq:score_balance}\\
\sum_{i=1}^{N} y_{i,d} &= \tfrac{N}{2} \;=\; \tfrac{M(D_\tau+1)}{2}, &&\forall d \in \{1, \dots, D_\tau\}. \tag{Dimension parity}\label{eq:dim_balance}
\end{align}
Eq.~\eqref{eq:score_balance} enforces segment balance so that each
total-score bin is equally represented; Eq.~\eqref{eq:dim_balance}
enforces per-dimension balance so that no single sub-ability can be
solved by predicting the majority class. Importantly, neither
constraint alone is sufficient: a benchmark can be score-uniform yet
have one dimension always equal to $1$, and conversely a benchmark
can be per-dimension-balanced yet entirely concentrated near
$s\!\approx\!D_\tau/2$ (cf.\ \S\ref{subsec:segment} of the
experiments).

\begin{lemma}[Attainability of dual balance]
\label{thm:existence}
Fix a task $\tau$ and suppose
\begin{enumerate}
  \item[(A1)] $\mathcal{S}_{\textnormal{full}}^{(\tau)} \neq \varnothing$
  (fully positive seeds exist).
  \item[(A2)] For every $d \in \{1,\dots,D_\tau\}$, a single-dimension
  flip operator $\mathcal{F}_d$ satisfying Eq.~\eqref{eq:flip_operator}
  is realizable.
  \item[(A3)] For every $S \subseteq \{1,\dots,D_\tau\}$ the composition
  $\mathcal{F}_S = \bigcirc_{d \in S} \mathcal{F}_d$ acts as the
  characteristic flip
  $\mathbf{y}(\mathcal{F}_S(p^+,x^+))_d = \mathbf{1}[d \notin S]$ for
  every seed.
\end{enumerate}
Then for every tolerance $(\rho,\delta)$ with $\rho\in[0,1)$ and
$\delta\ge 0$ there is an achievable benchmark meeting the stopping
criterion~\eqref{eq:stop}; the target can be tightened to exact dual
balance (Eqs.~\eqref{eq:score_balance}--\eqref{eq:dim_balance}), which
is realizable with $|\mathcal{B}|=M(D_\tau+1)$ whenever $M(D_\tau+1)$ is
even. Consequently the three monitored quantities of
\S\ref{subsec:method_dynamic} can be driven below any prescribed
threshold, however strict.
\end{lemma}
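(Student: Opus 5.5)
The plan is to split the claim into a \emph{realizability} part and a purely \emph{combinatorial} part. For realizability, (A1) gives some seed $(p^+,x^+)\in\mathcal{S}_{\text{full}}^{(\tau)}$. For an arbitrary target vector $\mathbf{v}\in\{0,1\}^{D_\tau}$, set $S=\{d: v_d=0\}$. By (A3), the triple $\bigl(\mathcal{F}_S(p^+,x^+),\mathbf{v}\bigr)$ is a legitimate benchmark example. Here (A2) guarantees that each factor $\mathcal{F}_d$ exists, and $S=\varnothing$ returns the seed itself. So every label vector in the cube is attainable, and a benchmark is nothing more than a multiset of vectors in $\{0,1\}^{D_\tau}$. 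It then suffices to exhibit a multiset of size $M(D_\tau+1)$ satisfying Eqs.~\eqref{eq:score_balance}--\eqref{eq:dim_balance}, reusing the seed if necessary. Nothing in the definition of $\mathcal{B}$ requires distinct seeds, although distinct seeds can be used if $\mathcal{S}_{\text{full}}$ is large.

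For the combinatorial core, I would use \emph{complement pairing}. If $\mathbf{v}$ has weight $s$, then $\mathbf{1}-\mathbf{v}$ has weight $D_\tau-s$, and the pair $\{\mathbf{v},\mathbf{1}-\mathbf{v}\}$ contributes exactly one \textsc{Yes} to every coordinate.

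When $D_\tau$ is odd, the score levels split into $(D_\tau+1)/2$ disjoint pairs $\{s,D_\tau-s\}$. For each pair, take any $M$ vectors of weight $s$, for instance prefixes $\mathbf{1}[d\le s]$, together with their complements. This gives exactly $M$ samples at each of the two levels. It also adds $M$ to every coordinate sum, so the total is $M(D_\tau+1)/2=N/2$.

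When $D_\tau$ is even, the off-middle levels are handled the same way and contribute $M\cdot D_\tau/2$ to each coordinate. The middle level $s=D_\tau/2$ is self-complementary, so its $M$ samples must themselves contribute $M/2$ per coordinate. This is possible exactly when $M$ is even: take $M/2$ complementary pairs of weight-$D_\tau/2$ vectors. Since $D_\tau$ is even, $M(D_\tau+1)$ is even iff $M$ is even, so this is precisely the stated parity hypothesis. That hypothesis is also necessary, because Eq.~\eqref{eq:dim_balance} demands $N/2\in\mathbb{Z}$.

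For the tolerance version and the final ``consequently'' clause, note that $M$ is ours to choose. Choosing $M$ even always yields an exactly dually balanced benchmark. At exact balance, every per-level deviation from $M$ and every per-dimension deviation from a $1{:}1$ ratio vanishes, so whatever the stopping criterion~\eqref{eq:stop} measures is identically zero. It is therefore met for every $\rho\in[0,1)$ and $\delta\ge 0$, and all three monitored quantities of \S\ref{subsec:method_dynamic} sit at their minimum.

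The step I expect to be the real obstacle is the even-$D_\tau$ middle level. This is the only place where parity bites, and I would make sure the argument shows both sufficiency via pairing and necessity via integrality. A secondary point to state carefully is that (A3) is what lets us treat labels as freely assignable. Without (A3), composed flips could interfere, and only weight-$\le 1$ defects would be guaranteed.
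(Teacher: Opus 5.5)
Your proof is correct, but it reaches the combinatorial core by a different route from the paper's. The first step is the same as the paper's reduction: (A1) supplies a seed and (A2)+(A3) make every flip set realizable, so everything reduces to choosing a multiset of label vectors.

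\textbf{How the paper does it.} The paper never splits on the parity of $D_\tau$. It lists the $M(D_\tau+1)$ required flip sizes in arbitrary order and lays them as consecutive cyclic arcs on $\mathbb{Z}_{D_\tau}$, each starting where the previous one ended. The total arc length is $T_{\mathrm{tot}}=MD_\tau(D_\tau+1)/2$, so wrap-around counting covers every coordinate exactly $T_{\mathrm{tot}}/D_\tau=M(D_\tau+1)/2$ times. That is an integer precisely under the parity hypothesis.

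\textbf{What your complement pairing adds.}
\begin{enumerate}
  \item It shows where parity actually bites: the self-complementary middle level when $D_\tau$ is even.
  \item It proves the hypothesis is also necessary, via integrality of $N/2$.
  \item It makes explicit that $M$ may be chosen even, so exact balance is always available.
  \item It gives a stronger, local balance: each level pair $\{s,D_\tau-s\}$, and the middle level, is by itself per-dimension balanced. The paper's arbitrary ordering only guarantees global balance.
\end{enumerate}

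The two constructions are closely related. An arc of length $D_\tau-s$ followed by the next arc of length $s$ covers the cycle exactly once. So ordering the sizes as consecutive $(D_\tau-s,s)$ pairs turns the paper's tiling into a complement pairing.

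\textbf{What the arc argument buys.} It is more general. It equalizes flip counts across dimensions for any multiset of flip sizes whose total is divisible by $D_\tau$, with exact parity whenever the mean flip size is $D_\tau/2$. So it adapts unchanged to non-uniform score targets. Your pairing needs the score histogram to be symmetric under $s\mapsto D_\tau-s$.

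\textbf{Two corrections to your last paragraph.} Both apply equally to the paper's sketch.
\begin{enumerate}
  \item Criterion \eqref{eq:stop} uses the strict inequality $\max_s N_s-\min_s N_s<\delta$, so no benchmark meets it at $\delta=0$. Exact balance gives score spread $0$ and negative/positive ratio $1$; the first condition measures a ratio, not a deviation that vanishes. Exact balance therefore meets the criterion exactly when $\rho<1$ and $\delta>0$. The tolerance clause is only true for $\delta>0$, not for every $\delta\ge 0$ as you assert.
  \item Neither construction controls the seed load $u_\sigma$: reusing one seed puts all $N-M$ non-full samples on it. Your claim that all three monitored quantities ``sit at their minimum'' is justified only for the two balance axes.
\end{enumerate}
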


\begin{proof}[Proof sketch]
Exact dual balance is the tightest instance ($\rho\to1$, $\delta=0$) and
dominates every looser tolerance, so it suffices to construct it. We do
so explicitly in Appendix~\ref{app:proof}: single-dimension flips are
composed via (A2)+(A3) into multi-dimension flips whose $M(D_\tau+1)$
label vectors realise each score level exactly $M$ times and load every
dimension exactly $N/2$ times, establishing
Eqs.~\eqref{eq:score_balance}--\eqref{eq:dim_balance} when
$M(D_\tau+1)$ is even; any looser $(\rho,\delta)$ is then met a fortiori.
\end{proof}

\subsection{Dynamic: Iterative Construction Algorithm}
\label{subsec:method_dynamic}

Theorem~\ref{thm:existence} guarantees that a dually balanced benchmark
exists, but in practice the seed pool, the LLM-rewriter, and the
modality operators all incur finite costs, so we cannot enumerate the
combinatorial template once and stop. Instead, we grow $\mathcal{B}$
dynamically: at each step we identify the least-populated non-full score,
the most positive-dominated dimensions, and the least-loaded seed, and
append one controlled multi-dimension negative for that target.

\paragraph{Monitored quantities.}
Write $N_t = |\mathcal{B}^{(t)}|$ and let
$\sigma(i)\in\mathcal{S}_{\textnormal{full}}$ be the seed of sample $i$.
Over $\mathcal{B}^{(t)}$ we track exactly the three quantities that
define the target: the per-dimension positive and negative proportions,
the score histogram, and the per-seed negative load,
\begin{equation}
\begin{aligned}
\pi_d^{\textsc{yes}} &= \frac{1}{N_t}\sum_{i} \mathbf{1}[y_{i,d}=1],\qquad
\pi_d^{\textsc{no}} = 1-\pi_d^{\textsc{yes}},\\[2pt]
N_s^{(t)} &= \bigl|\{i: s(\mathbf{y}_i)=s\}\bigr|,\qquad
u_\sigma^{(t)} = \bigl|\{i: \sigma(i)=\sigma,\ s(\mathbf{y}_i)<D_\tau\}\bigr|.
\end{aligned}
\label{eq:monitor}
\end{equation}

\paragraph{Per-step target.}
Each step addresses the least-populated non-full score level, the most
positive-dominated dimensions, and the least-loaded seed:
\begin{equation}
s^* = \arg\min_{0\le s < D_\tau} N_s^{(t)},\qquad
T^* \in \operatorname*{arg\,min}_{|T|=D_\tau-s^*}\ \sum_{d\in T} \frac{\pi_d^{\textsc{no}}}{\pi_d^{\textsc{yes}}},\qquad
\sigma^* = \arg\min_{\sigma\in\mathcal{S}_{\textnormal{full}}} u_\sigma^{(t)},
\label{eq:targets}
\end{equation}
with ties broken uniformly at random, so that $T^*$ gathers the
$D_\tau-s^*$ dimensions of smallest negative/positive ratio. Applying
the composite flip $\mathcal{F}_{T^*}$ of
\S\ref{subsec:method_decoupling} to $\sigma^*=(p^+,x^+)$---its
prompt-related coordinates $T^*\cap\mathcal{D}_p$ by graded rewriting
and its modality-related coordinates $T^*\cap\mathcal{D}_m$ by the
atomic operators---produces a negative of score $s^*$ with label
$\mathbf{y}^{(T^*)}$, where $\mathbf{y}^{(T^*)}_d=\mathbf{1}[d\notin T^*]$,
which is appended to $\mathcal{B}^{(t+1)}$.

\paragraph{Stopping criterion.}
The loop halts once the two balance axes are within tolerance:
\begin{equation}
\min_{d}\, \frac{\pi_d^{\textsc{no}}}{\pi_d^{\textsc{yes}}} > \rho
\qquad\text{and}\qquad
\max_s N_s^{(t)} - \min_s N_s^{(t)} < \delta .
\label{eq:stop}
\end{equation}
The first condition forces every dimension to a negative/positive ratio
of at least $\rho$ (perfect balance is ratio $1$); the second flattens
the score histogram; the seed load $u_\sigma$ enters only through the
least-loaded-seed choice in Eq.~\eqref{eq:targets}, keeping negatives
spread across seeds rather than being a stopping target. For all three
tasks of D\textsuperscript{3}-Omni we use $\rho=0.95$ and $\delta=6$.
Each step raises the negative count of the currently most
positive-dominated dimensions and of the least-filled score, so both
deficits decrease monotonically; by Lemma~\ref{thm:existence} the
tolerance region is attainable for any $(\rho,\delta)$---indeed down to
exact balance---so the loop terminates.

\begin{algorithm}[t]
\caption{D\textsuperscript{3}-Construction: dynamic dual-balanced benchmark assembly}
\label{alg:construction}
\begin{algorithmic}[1]
\Require Raw modalities $\mathcal{X}$; $\mathcal{D}_p, \mathcal{D}_m$; tri-LLM ensemble $\{\mathrm{LLM}_1,\mathrm{LLM}_2,\mathrm{LLM}_3\}$; reconciler / rewriter Gemini-3.1 Pro; modality operators $\{\mathcal{O}_d\}_{d \in \mathcal{D}_m}$; tolerances $\rho,\delta$.
\Ensure Benchmark $\mathcal{B}$ meeting the stopping criterion Eq.~\eqref{eq:stop}.
\Statex \textbf{Stage~A: Seed curation (Decoupling, Step~1).}
\State Run inverse-prompting on $\mathcal{X}$ with each $\mathrm{LLM}_i$, reconcile + human-calibrate to obtain $p^+(x)$ for every $x$.
\State Run modality-integrity check on every $x$ via $\{\mathrm{LLM}_i\}$ + human audit.
\State $\mathcal{S}_{\text{full}} \gets \{(p^+(x), x) : \mathbf{y}(p^+(x), x) = \mathbf{1}_{D_\tau}\}$;\; $\mathcal{B} \gets \mathcal{S}_{\text{full}}$.
\Statex \textbf{Stage~B: Monitor--generate--update loop.}
\State Compute $\pi_d^{\textsc{no}}/\pi_d^{\textsc{yes}},\, N_s,\, u_\sigma$ over $\mathcal{B}$. \Comment{Eq.~\eqref{eq:monitor}}
\While{$\min_d \pi_d^{\textsc{no}}/\pi_d^{\textsc{yes}} \le \rho$ \textbf{or} $\max_s N_s - \min_s N_s \ge \delta$} \Comment{Eq.~\eqref{eq:stop}}
  \State $s^* \gets \arg\min_{s<D_\tau} N_s$;\; $T^* \in \operatorname*{arg\,min}_{|T|=D_\tau-s^*}\sum_{d\in T}\pi_d^{\textsc{no}}/\pi_d^{\textsc{yes}}$;\; $\sigma^* \gets \arg\min_\sigma u_\sigma$. \Comment{Eq.~\eqref{eq:targets}}
  \State $(p^-,x^-) \gets \mathcal{F}_{T^*}(\sigma^*)$: prompt coords of $T^*$ by rewriting, modality coords by operators;\; $\mathbf{y}^- \gets \mathbf{y}^{(T^*)}$.
  \State $\mathcal{B} \gets \mathcal{B} \cup \{(p^-, x^-, \mathbf{y}^-)\}$;\; update $\pi_d^{\textsc{no}}/\pi_d^{\textsc{yes}}, N_s, u_\sigma$.
\EndWhile
\State \Return $\mathcal{B}$.
\end{algorithmic}
\end{algorithm}

The output of Algorithm~\ref{alg:construction} is a benchmark meeting
the stopping criterion Eq.~\eqref{eq:stop}---the attainable target
guaranteed by Lemma~\ref{thm:existence}---in the canonical triplet
format of Eq.~\eqref{eq:benchmark_triple}, equipped with the diagnostic
metrics of \S\ref{subsec:method_dual_balancing}; this is exactly the
benchmark evaluated by the experiments in Section~\ref{sec:experiments}.

\subsection{Diagnostic Metrics Derived from the Dual Balance}
\label{subsec:method_metrics}

The dual balance is precisely what makes the metrics below readable
as capability signals rather than as artifacts of an imbalanced label
distribution. Let $\hat{\mathbf{y}}_i \in \{0,1\}^{D_\tau}$ be a
judge's predicted label vector for the $i$-th benchmark sample, and
let $\mathcal{B}_s = \{i : s(\mathbf{y}_i) = s\}$.

\paragraph{Metric 1: Per-dimension accuracy.}
This metric quantifies the judge's resolution of the $d$-th
sub-ability:
\begin{equation}
\textrm{Acc}_d \;=\; \frac{1}{N}\sum_{i=1}^{N} \mathbf{1}\bigl[\hat{y}_{i,d} = y_{i,d}\bigr].
\label{eq:acc_dim}
\end{equation}

\paragraph{Metric 2: Per-segment accuracy.}
This metric is the mean per-sample dimension match inside score
bin $s$:
\begin{equation}
\textrm{Acc}_s \;=\; \frac{1}{|\mathcal{B}_s|}\sum_{i \in \mathcal{B}_s}\frac{1}{D_\tau}\sum_{d=1}^{D_\tau} \mathbf{1}\bigl[\hat{y}_{i,d} = y_{i,d}\bigr].
\label{eq:acc_seg}
\end{equation}
Under Eq.~\eqref{eq:score_balance} every $|\mathcal{B}_s|$ equals $M$,
so the curve $s \mapsto \textrm{Acc}_s$ is undistorted by segment
mass.

\paragraph{Metric 3: Per-segment perfect-match rate.}
This metric reports the fraction of samples in segment $s$ whose
predicted label vector exactly matches the ground truth:
\begin{equation}
\textrm{PM}_s \;=\; \frac{1}{|\mathcal{B}_s|}\sum_{i \in \mathcal{B}_s}\mathbf{1}\!\Bigl[\hat{\mathbf{y}}_i = \mathbf{y}_i\Bigr].
\end{equation}

\paragraph{Metric 4: Dimension-restricted Yes/No exact-match rates.}
These metrics report respectively the fraction of samples on which
a judge gets every ground-truth-\textsc{Yes} dimension right, and
analogously for \textsc{No}:
\begin{align}
\textrm{PM}^{\textsc{Yes}} &= \frac{1}{|\mathcal{I}^{\textsc{Yes}}|}\sum_{i \in \mathcal{I}^{\textsc{Yes}}}\mathbf{1}\!\Bigl[\hat{y}_{i,d} = 1,\; \forall d \in \mathcal{Y}^+_i\Bigr], &&\mathcal{Y}^+_i = \{d : y_{i,d}=1\},\\
\textrm{PM}^{\textsc{No}}\; &= \frac{1}{|\mathcal{I}^{\textsc{No}}|}\sum_{i \in \mathcal{I}^{\textsc{No}}}\mathbf{1}\!\Bigl[\hat{y}_{i,d} = 0,\; \forall d \in \mathcal{Y}^-_i\Bigr], &&\mathcal{Y}^-_i\, = \{d : y_{i,d}=0\},
\label{eq:yesno_pm}
\end{align}
where $\mathcal{I}^{\textsc{Yes}} = \{i : \mathcal{Y}^+_i \neq \varnothing\}$
and $\mathcal{I}^{\textsc{No}} = \{i : \mathcal{Y}^-_i \neq \varnothing\}$.

\paragraph{Metric 5: Dimension-coupling matrix.}
$\Sigma$ (Eq.~\eqref{eq:coupling}) is evaluated separately on
$\{\mathbf{y}_i\}$ (reference) and on $\{\hat{\mathbf{y}}_i\}$
(predicted), with off-diagonal excess
$\bigl|\Sigma^{\text{pred}}_{d,d'}\bigr| - \bigl|\Sigma^{\text{ref}}_{d,d'}\bigr|$
flagging spurious coupling.

These five quantities, taken jointly, instantiate the four
diagnostic axes listed in Section~\ref{sec:experiments}: segment
robustness (Eq.~\eqref{eq:acc_seg}), dimension-wise resolution
(Eq.~\eqref{eq:acc_dim}), dimensional decoupling
(Eq.~\eqref{eq:coupling}), and Yes/No commit behavior
(Eq.~\eqref{eq:yesno_pm}).

%% file: experiment.tex
\section{Experiments}
\label{sec:experiments}

We evaluate a suite of state-of-the-art multimodal judges on
D\textsuperscript{3}-Omni and organise the diagnosis around five
questions that an unbalanced benchmark cannot answer:
(i)~at the macro level, do today's OmniJudges differ enough to be
distinguished, and does the ranking transfer across modalities
(Sec.~\ref{subsec:setup})?
(ii)~as the ground-truth total score varies, in which score regime
does each judge break down, and in which direction does it miscalibrate
(Sec.~\ref{subsec:segment})?
(iii)~when forced to commit on all-Yes or all-No segments, does a judge
expose a structured Yes/No class bias, and does its aggregate accuracy
actually reflect an ability to localize the few defects that a segment
contains (Sec.~\ref{subsec:yesno})?
(iv)~when nominally orthogonal dimensions are to be judged
independently, does a judge keep its per-dimension decisions decoupled
or let one decision drag the others (Sec.~\ref{subsec:coupling})?
(v)~at the per-dimension level, which fine-grained requirements floor
every judge near chance, and which expose the largest inter-model gaps
(Sec.~\ref{subsec:per_dim})?
Because D\textsuperscript{3}-Omni is balanced over both dimensions and
total scores, every drop in accuracy is attributable to a localized
capability gap rather than to a skewed label prior, turning aggregate
numbers into a transparent diagnostic map. The diagnosis is built
around four figures
(Figures~\ref{fig:multi_total},~\ref{fig:perfect},~\ref{fig:corr}
and~\ref{fig:dim}); Sec.~\ref{subsec:synthesis} then maps the four
diagnosed shortcomings back to the three operators of the
D\textsuperscript{3} construction framework (Sec.~\ref{sec:method}).
The full judging prompts are reproduced in
Appendix~\ref{app:eval_prompts}.

\subsection{Setup and Overall Accuracy}
\label{subsec:setup}

\paragraph{Judges, tasks and metrics.}
We benchmark ten judges on T2I, eight on T2V and six on TTS, drawn
from the Gemini, GPT, Claude, Grok and Qwen families; we treat them
under a single pool, since the goal is to characterize judging
behavior rather than to compare access types. Every judge of a given
task is queried with the same prompt template
(Appendix~\ref{app:eval_prompts}) and answers in a JSON array of $D$ Yes/No decisions. Unless stated otherwise, the observations below refer to the judges evaluated here. All judges are queried at temperature $0$ with every
other decoding parameter left at its provider default, so the reported
behavior reflects each model's deterministic mode rather than sampling
variance. Because every dimension is a binary judgment, we adopt
four complementary metrics:
(i)~\textbf{per-dimension binary accuracy}, the primary metric;
(ii)~\textbf{segment-wise accuracy} on the total-score slice
$s(\mathbf{y})\!=\!\sum_d y_d\!\in\!\{0,\dots,D\}$, with one curve per
judge per task, together with the total-score calibration curve;
(iii)~the \textbf{off-diagonal Pearson correlation matrix} of each
judge's predicted per-dimension decisions, which measures whether the
judge decides each near-orthogonal dimension independently or allows one latent factor to drive several outputs (a pair is flagged at
$\rho\!>\!0.6$ and treated as strongly entangled at $\rho\!>\!0.8$);
and (iv)~\textbf{per-segment Yes/No perfect-match rates}, which expose
class-prior shortcuts and, more strictly, whether a judge can localize
every minority defect within a segment.

\paragraph{Overall accuracy hides the diagnosis.}
Averaging per-dimension binary accuracy over the entire test set
gives a tight band on the visual tasks: the
ten T2I judges sit between $72.4\%$ and $78.6\%$ and the eight T2V
judges between $69.5\%$ and $76.7\%$, a $6$--$7$ point spread that one
would dismiss as a saturating leaderboard. TTS does not so much widen
the band as reorder it: accuracy compresses to $58.7\%$--$65.7\%$ and
the family that leads the visual tasks no longer leads, with
Qwen-3.5-Omni-Plus on top ($65.70\%$) and the two Gemini judges,
dominant on T2I and T2V, sliding into the lower half. The four
diagnostic axes that follow show that the visual-task tightness is
itself an artifact: every T2I/T2V judge loses substantial accuracy on
specific fine-grained regimes, but each one loses it in a different
regime, so the macro average conceals these losses.

\subsection{Score-Segment Behavior and Calibration}
\label{subsec:segment}

\begin{table}[t]
\centering
\footnotesize
\setlength{\tabcolsep}{3pt}
\resizebox{\textwidth}{!}{%
\begin{tabular}{@{}lrrrrrrrrrrrrrrr@{}}
\toprule
& \multicolumn{5}{c}{T2I} & \multicolumn{5}{c}{T2V} & \multicolumn{5}{c}{TTS} \\
\cmidrule(lr){2-6}\cmidrule(lr){7-11}\cmidrule(l){12-16}
Judge & $\mathcal{D}_p$ & $\mathcal{D}_m$ & Low & Mid & High & $\mathcal{D}_p$ & $\mathcal{D}_m$ & Low & Mid & High & $\mathcal{D}_p$ & $\mathcal{D}_m$ & Low & Mid & High \\
\midrule
Gemini-3.1-Pro      & $\mathbf{83.21}$ & $56.93$ & $\underline{81.2}$ & $\mathbf{74.2}$ & $\underline{80.3}$ & $\mathbf{85.20}$ & $51.41$ & $\mathbf{71.1}$ & $\underline{71.7}$ & $\mathbf{84.5}$ & $\underline{64.93}$ & $50.16$ & $\underline{58.4}$ & $50.5$ & $73.2$ \\
Gemini-3-Flash      & $\underline{81.91}$ & $\underline{57.17}$ & $\mathbf{81.5}$ & $\underline{73.0}$ & $78.1$ & $\underline{84.81}$ & $49.52$ & $\underline{69.3}$ & $\mathbf{72.3}$ & $\underline{83.5}$ & $63.09$ & $\underline{52.98}$ & $46.8$ & $\underline{50.7}$ & $\mathbf{83.0}$ \\
Qwen-3.5-Omni-Plus  & $78.27$ & $\mathbf{57.65}$ & $78.8$ & $67.2$ & $77.9$ & $80.30$ & $\underline{52.04}$ & $68.1$ & $66.3$ & $82.6$ & $\mathbf{66.33}$ & $\mathbf{64.20}$ & $\mathbf{67.2}$ & $\mathbf{51.4}$ & $\underline{78.5}$ \\
Qwen-3.5-Omni-Flash & $76.64$ & $52.74$ & $69.9$ & $66.9$ & $\mathbf{80.5}$ & $77.85$ & $\mathbf{52.27}$ & $66.2$ & $63.1$ & $82.3$ & $64.02$ & $51.54$ & $55.5$ & $50.4$ & $75.4$ \\
\midrule
Gemini-3.5-Flash    & $\mathbf{82.85}$ & $52.92$ & $78.0$ & $\mathbf{73.4}$ & $\mathbf{81.3}$ & $\mathbf{85.57}$ & $\underline{53.30}$ & $\underline{71.3}$ & $\mathbf{71.3}$ & $\mathbf{87.0}$ & -- & -- & -- & -- & -- \\
GPT-5.4             & $\underline{80.20}$ & $53.70$ & $78.8$ & $\underline{70.0}$ & $77.7$ & -- & -- & -- & -- & -- & -- & -- & -- & -- & -- \\
Grok-4-Fast         & $80.00$ & $\underline{56.88}$ & $\underline{79.4}$ & $68.5$ & $\underline{79.9}$ & -- & -- & -- & -- & -- & -- & -- & -- & -- & -- \\
GPT-5.2             & $78.96$ & $52.94$ & $77.3$ & $68.4$ & $77.4$ & -- & -- & -- & -- & -- & -- & -- & -- & -- & -- \\
Claude-Opus-4.7     & $77.18$ & $55.57$ & $78.7$ & $66.1$ & $75.2$ & -- & -- & -- & -- & -- & -- & -- & -- & -- & -- \\
Claude-Opus-4.6     & $77.10$ & $\mathbf{58.60}$ & $\mathbf{80.8}$ & $64.7$ & $76.0$ & -- & -- & -- & -- & -- & -- & -- & -- & -- & -- \\
Qwen-3.6-27B        & -- & -- & -- & -- & -- & $\underline{77.76}$ & $49.16$ & $65.0$ & $\underline{65.6}$ & $78.8$ & -- & -- & -- & -- & -- \\
Qwen-3-Omni-Flash   & -- & -- & -- & -- & -- & $76.96$ & $\mathbf{56.18}$ & $\mathbf{71.8}$ & $60.4$ & $80.3$ & -- & -- & -- & -- & -- \\
Qwen-3-Omni-30B     & -- & -- & -- & -- & -- & $76.92$ & $49.53$ & $63.0$ & $61.6$ & $\underline{83.0}$ & $\mathbf{64.35}$ & $\underline{49.62}$ & $\mathbf{51.8}$ & $\underline{50.3}$ & $\underline{78.3}$ \\
GPT-Audio-1.5       & -- & -- & -- & -- & -- & -- & -- & -- & -- & -- & $\underline{62.18}$ & $\mathbf{50.25}$ & $\underline{43.2}$ & $\mathbf{51.0}$ & $\mathbf{81.9}$ \\
\bottomrule
\end{tabular}}
\caption{Per-judge accuracy with the three tasks side by side. Top four
rows: cross-task judges present in all tasks; below: task-specific
judges (\texttt{--}: not evaluated). For each task,
$\mathcal{D}_p$/$\mathcal{D}_m$ are the prompt-/modality-related macro
accuracies and Low/Mid/High the accuracies in the
$0$--$33$/$33$--$66$/$66$--$100\%$ score bands. Per column,
\textbf{bold} marks the best and \underline{underline} the second-best value,
computed separately within the cross-task and task-specific blocks.}
\label{tab:dp_dm_macro}
\end{table}

\paragraph{Reading the two rows.}
We slice the test set by the ground-truth total score
$s(\mathbf{y})\!=\!\sum_d y_d$, which counts how many of the $D$
requirements a sample satisfies; by construction every score segment
holds the same number of samples. The top row of
Figure~\ref{fig:multi_total} plots, for each judge, its per-dimension
binary accuracy within each segment (one curve per judge, the legend
value being the overall mean), and thus reveals the difficulty regime
on which a judge decides accurately or breaks down. The bottom row
plots the total-score deviation, predicted minus expected total, so a
point above the zero baseline is over-scoring and one below is
under-scoring, revealing the direction in which the judge's scores are
biased. Read together, the two rows turn a single aggregate accuracy
into a difficulty-resolved capability curve and a bias direction.

\paragraph{Top row: a U-shaped collapse on the mixed-quality middle.}
Every curve is markedly U-shaped: a judge decides easily on the
all-violated and all-satisfied extremes but loses substantial
per-dimension accuracy in the middle, where roughly half the
requirements hold and the other half fail, i.e.\ the ambiguous
mixed-quality samples on which no all-Yes or all-No prior helps and each dimension has to be decided on its own. On T2I, GPT-5.2 reads
$84.9\%$ at $s\!=\!0$ and $83.4\%$ at $s\!=\!D$ yet drops to about
$66\%$ in the middle; at the aggregate level Gemini-3.1-Pro leads T2I
($78.6\%$), Gemini-3.5-Flash leads T2V ($76.7\%$), and TTS is hardest, where the best judge, Qwen-3.5-Omni-Plus, reaches only $65.7\%$. The dip marks a judge's
real weakness and sits precisely where ordinary benchmarks are
sparsest, so an unbalanced set yields a high mean from the two easy tails while the collapse remains hidden.
Table~\ref{tab:dp_dm_macro} gives the banded counterpart (Low/Mid/High
$=$ the $0$--$33$/$33$--$66$/$66$--$100\%$ score bands): for almost every
judge the Mid band is the trough, e.g.\ Gemini-3.1-Pro on T2I reads
$81.2/74.2/80.3$ and Qwen-3.5-Omni-Plus $78.8/67.2/77.9$.

\paragraph{The shape of the dip is itself a quality profile.}
Judges differ in the depth, position and symmetry of the U, and each
shape reads as a distinct competence profile. Gemini-3.1-Pro traces the
shallowest, flattest U on T2I (trough $72.7\%$, endpoints $89.2\%$ and
$85.2\%$), a judge equally at ease on clearly-bad, mixed and
clearly-good samples, which is why it also tops the macro table; GPT-5.2
and Gemini-3.5-Flash are near-symmetric, concentrating their error
squarely on the half-satisfied middle. Claude-Opus-4.6 is the opposite
extreme, a deep narrow valley (trough $62.6\%$) whose all-violated tail
is the highest of any judge ($94.6\%$) but whose recovery is weak, the
profile of a judge that only reliably flags blatant defects, while
Grok-4-Fast-Reasoning shows high shoulders over a deep trough, so
reading only its extremes would overstate it. The lone T2I exception is
Qwen-3.5-Omni-Flash: instead of a symmetric U it traces a rising slope
whose trough sits early (near $29\%$) and whose all-satisfied end
($83.9\%$) exceeds its all-violated end ($78.3\%$), the only judge more
reliable on good samples than on blatantly bad ones.

\paragraph{Endpoint asymmetry: confirming quality versus detecting
defects.}
Beyond depth, the relative height of the two arms is diagnostic: the
difference $\mathrm{hi}\!-\!\mathrm{lo}$ between the all-satisfied and
all-violated ends says whether a judge is better at confirming good
content or at catching bad content. Averaged over judges the sign
flips with the modality: T2I is mildly negative ($-3.67$;
Claude-Opus-4.6 reaches $-10.6$), so judges are stricter on
poor images, whereas T2V ($+19.05$) and TTS ($+32.45$) are strongly
positive, so on the temporal tasks judges confirm all-good content well
but cannot
flag every defect on all-bad content; the asymmetry is same-signed and
slightly larger in Chinese (T2V $+21.22$, TTS $+36.81$). The trough
migrates accordingly, deepening and shifting toward the low-score,
mostly-violated side on the temporal tasks (the three T2V Gemini judges
bottom out near $27\%$, and Qwen-3-Omni-30B reaches $52.57\%$, one of
the deepest collapses in the study), so the hardest samples change from
half-right-half-wrong on T2I to mostly-wrong-but-not-all on T2V and
TTS. The extreme is GPT-Audio-1.5 on TTS, closer to a monotonic ramp
than a U, rising from $46.3\%$ on all-violated speech to $98.5\%$ on
all-satisfied speech and essentially saying Yes until the audio is
clearly clean.

\paragraph{Bottom row: a regression-to-the-mean bias.}
The deviation curve falls monotonically with the ground-truth score,
giving a consistent pattern of over-scoring the bad and under-scoring
the good: every judge sits above the zero baseline at $s\!=\!0$ and
below it at $s\!=\!D$, crossing zero somewhere in the low-to-mid range.
On T2I, GPT-5.2 over-scores by $+15$ points on all-violated samples and
under-scores by $-17$ on all-satisfied ones, so the error is
directional and predictable rather than random. The bias is most severe
on the all-violated tail of the temporal tasks: even the best-calibrated
TTS judge, Qwen-3.5-Omni-Plus, still mislabels about one in five No
dimensions as Yes at $s\!=\!0$, and GPT-Audio-1.5 mislabels more than
half ($46.3\%$ accuracy there). This is the macro-level footprint of
the Yes-biased class prior that Sec.~\ref{subsec:yesno} isolates
dimension by dimension, where no TTS judge produces all-No on more than
$12.24\%$ of the samples for which all-No is correct.

\paragraph{Why only a balanced set reveals this, and a counterfactual.}
Both the U trough and the sign-changing deviation line can be read only
when every score segment is populated. A direct counterfactual makes
the point: reading a single high-score segment would rewrite the
leaderboard, since the all-satisfied segment alone would rank
Gemini-3-Flash first on TTS ($99.4\%$), a judge third from last under
balanced macro accuracy ($60.2\%$), while the all-violated segment would
instead promote Qwen-3.5-Omni-Plus; on T2I the all-violated segment would
elevate Claude-Opus-4.6 ($94.6\%$), eighth of ten on the macro average.
Score uniformity is therefore a necessary condition for a trustworthy
leaderboard, not a stylistic preference.

\paragraph{Cross-modal synthesis.}
Every judge converges on the same strategy: exploit a majority-class
shortcut wherever the labels concentrate, and pay for it in the
mixed-quality middle. T2I expresses this as a mildly low-biased U,
T2V and TTS as a high-biased U whose trough migrates toward the
low-score side and deepens with the length of the temporal signal, so
the amplitude grows from T2I through T2V to TTS. Importantly, the
rebound of these curves on high-score segments is a per-dimension
average and should not be read as evidence that a judge has localized
the few remaining defects; Sec.~\ref{subsec:yesno} shows that this
rebound is largely carried by confirming the many satisfied dimensions.

\subsection{Yes/No Perfect Match: Does the Headline Number Localise
Defects?}
\label{subsec:yesno}

\paragraph{Reading the two rows.}
Per-dimension accuracy averages over all dimensions of a sample, so a
judge can post a high mid-to-high score by confirming the many
satisfied dimensions while missing the few violated ones.
Figure~\ref{fig:perfect} applies a stricter test per score segment: the
top row is the Yes-rate, the fraction of samples on which the
judge is correct on \emph{all} ground-truth-Yes dimensions, and the
bottom row is the symmetric No-rate on all ground-truth-No
dimensions. Because getting every dimension of one polarity right is
demanding, the absolute rates sit far below the segment accuracy of
Figure~\ref{fig:multi_total}, so the figure is read for relative
contrasts, between judges and between the Yes and No rows, rather than
for absolute height. The No row is the strict measure of defect
attribution: a high-score sample keeps only one or two No dimensions,
and No-rate asks whether the judge catches every one of them.

\begin{figure*}[!t]
\centering
\includegraphics[width=\textwidth]{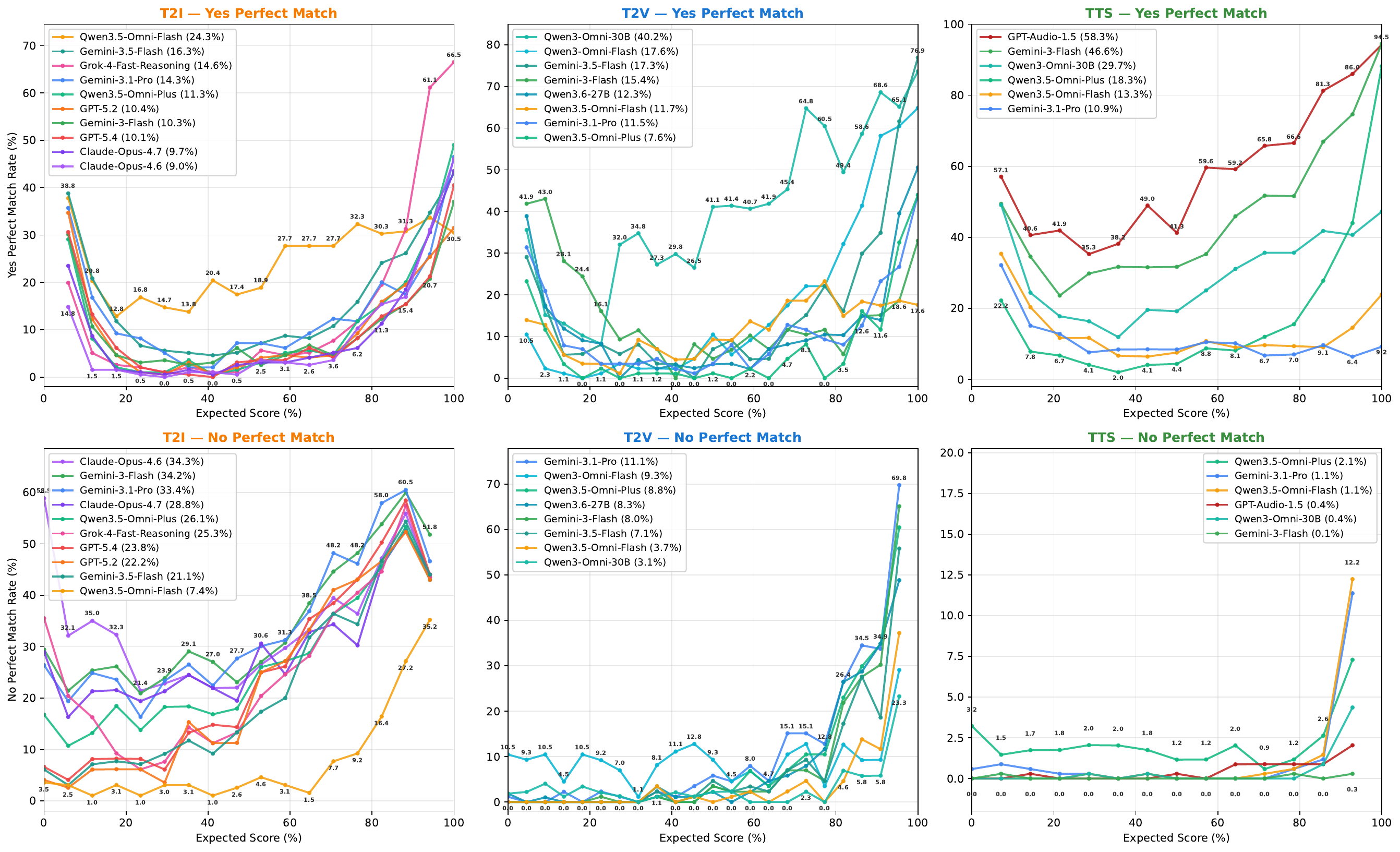}
\caption{Per-segment Yes/No perfect-match rates. \textbf{Top row:} Yes
perfect-match, correct on all ground-truth-Yes dimensions.
\textbf{Bottom row:} No perfect-match, the symmetric quantity on
ground-truth-No dimensions. A calibrated judge would show roughly
symmetric rows; the visible asymmetry is class bias, and the low
No-row values quantify how poorly the few defects are localized.}
\label{fig:perfect}
\end{figure*}

\paragraph{Judges confirm far better than they detect.}
On each task the peak Yes-rate far exceeds the peak No-rate and no judge is symmetric, and the gap widens sharply
as the modality moves from vision to audio: the peak No perfect-match
is $60.51\%$ on T2I (Gemini-3.1-Pro), $69.77\%$ on T2V (Gemini-3.1-Pro)
and only $12.24\%$ on TTS (Qwen-3.5-Omni-Flash), while the Yes side
stays high and reaches $94.52\%$ on TTS (Gemini-3-Flash), whose
overall TTS Yes rate ($46.7\%$) is more than four times that of its
sibling Gemini-3.1-Pro ($10.9\%$), a within-family split that echoes
the TTS family collapse of Sec.~\ref{subsec:setup}. The polarity
of the imbalance is itself a personality: on T2I a lenient judge such
as Qwen-3.5-Omni-Flash carries the highest Yes but the lowest No
perfect-match, whereas strict judges such as Claude-Opus-4.6 and
Gemini-3-Flash reach a high No ($\sim\!34\%$) but a low Yes, so class
bias does not even track family membership (on T2I Qwen-3.5-Omni-Flash
is Yes-biased, with Yes-rate minus No-rate equal to $+16.9$
while its sibling Qwen-3.5-Omni-Plus is No-biased at $-14.8$). On TTS
the asymmetry is extreme: GPT-Audio-1.5 confirms $58.3\%$ of
all-satisfied speech yet catches all-violated dimensions on only
$0.44\%$ of samples, and across the six TTS judges the No perfect-match
averages between $0.06\%$ and $2.15\%$ (the re-run Qwen-3-Omni-30B no
exception at $0.42\%$).

\paragraph{The high-score rebound is confirmation, not localization.}
This is the key complement to Figure~\ref{fig:multi_total}: the
high-score rebound in segment accuracy does not mean a judge has found
the defects. Restricting to high-score segments (at least $76\%$ of the
requirements satisfied, so only one or two No dimensions remain), the
mean No perfect-match is $44.92\%$ on T2I, $23.31\%$ on T2V and merely
$2.67\%$ on TTS (a peak of $12.24\%$, an all-segment mean below $1\%$).
In other words, a TTS judge that posts a strong mid-to-high segment
average is, on the very samples where only one or two defects remain,
almost never able to name them: its high accuracy is carried entirely
by confirming satisfied dimensions. The ordering $44.9\%$ on the visual
task, $23.3\%$ on video, $2.7\%$ on speech, is a modality statement:
when a judge can no longer rely on visual evidence and has to attribute a defect from temporal or audio cues alone, its localization ability
collapses. The capability center of today's OmniJudges is still
visual.

\paragraph{At the all-violated end the gap is combinatorial, and total
on temporal tasks.}
The low-score end makes the mechanism explicit. When the ground-truth
total is zero every dimension is a defect, so No perfect-match asks the
judge to catch all of them at once. Per-dimension accuracy can still look
reassuring here, yet the all-or-nothing rate is far lower: GPT-5.2 reads
$84.9\%$ per dimension on all-violated T2I samples but scores only
$4.06\%$ of them fully correct, the mark of an all-or-nothing score that
compounds many per-dimension decisions. On T2I the stricter judges still
retain a usable rate (Claude-Opus-4.6 $58.9\%$, Grok-4-Fast $35.5\%$),
but on the temporal tasks it collapses to the floor for every judge: on
T2V seven of the eight stay at or below $1.9\%$ (four at exactly $0\%$,
the maximum being Qwen-3-Omni-Flash at $10.5\%$), and on TTS five of the
six stay at or below $0.6\%$ (peak Qwen-3.5-Omni-Plus $3.2\%$). Two
forces compound. The metric runs over the maximum number of No
dimensions, so the Yes-bias every judge carries on low-score segments
(Sec.~\ref{subsec:segment}) is almost certain to leak at least one false
Yes and void the sample; and a temporal defect cannot be read from a
single frame but must be attributed from evidence spread over time or
audio, exactly the per-dimension detection that is weakest there and
that the coupling map shows is frequently decided by overall impression
rather than dimension by dimension (Sec.~\ref{subsec:coupling}). On T2V
and TTS a judge therefore almost never labels a wholly defective sample
as defective throughout.

\paragraph{The asymmetry traces back to the generation-judging
pipeline.}
The Yes-bias reproduces across every judge, both modalities and exactly
the segments where the test set forces a No commitment. Real-world
generations skew toward acceptable samples; if pre-training and
benchmark suites inherit that skew, an all-Yes default is locally
optimal almost everywhere. Without low-score samples in a
dimension-and-score-balanced benchmark, this entire diagnosis would be
unreachable: there would be no No perfect-match to fall toward zero in
the first place.

\paragraph{Cross-modal synthesis.}
On T2I and T2V the two rows roughly mirror each other across judges, so
a Yes-biased judge is paired with a No-biased counterpart and a balanced
evaluator can be assembled from the pool. On TTS the rows are
incommensurable: the Yes row approaches perfect prediction while the No
row hugs the floor for every judge. The Yes/No gap thus identifies TTS
as the most extreme manifestation of the generation-judging asymmetry,
and pinpoints class-prior correction, rather than fine-grained
perception alone, as the primary data-side intervention the TTS
sub-leaderboard demands.

\subsection{Judgment Decoupling: Pseudo-Decoupling Diagnosis}
\label{subsec:coupling}

\paragraph{Reading the matrix.}
A faithful OmniJudge should decide each near-orthogonal requirement on
its own, so we compute the off-diagonal Pearson correlation matrix of
each judge's per-dimension predictions (Figure~\ref{fig:corr}). The
reading is a statement about the judge, not about the dimensions: a
high correlation means the judge lets its decision on one dimension
drag its decision on another (a halo effect), whereas a low correlation
means it decides the two independently. In the figure a coloured pie
marks a pair on which one or more judges exceed $\rho\!>\!0.6$ (each
slice is one model) and a white cell reports the cross-judge mean
correlation when none does; counts below are over unordered dimension
pairs ($136$ on T2I, $231$ on T2V, $91$ on TTS). One caveat applies to the white cells: a white, low-correlation cell is only evidence of
independent judgment when the dimension is judged well above chance,
since a near-chance dimension such as T2V Video Reality ($0.42$--$0.48$,
Sec.~\ref{subsec:per_dim}) produces low correlations by near-constant
prediction rather than by genuine independence.

\paragraph{Pseudo-decoupling, defined.}
We term a judge \emph{pseudo-decoupled} when its macro accuracy remains competitive while its predicted dimension-vs-dimension matrix is dominated by a few high-$\rho$ blocks,
indicating that nominally distinct decisions are internally driven by a
single latent decision; we treat $\rho\!>\!0.8$ as strong entanglement.

\begin{figure*}[!t]
\centering
\includegraphics[width=\textwidth]{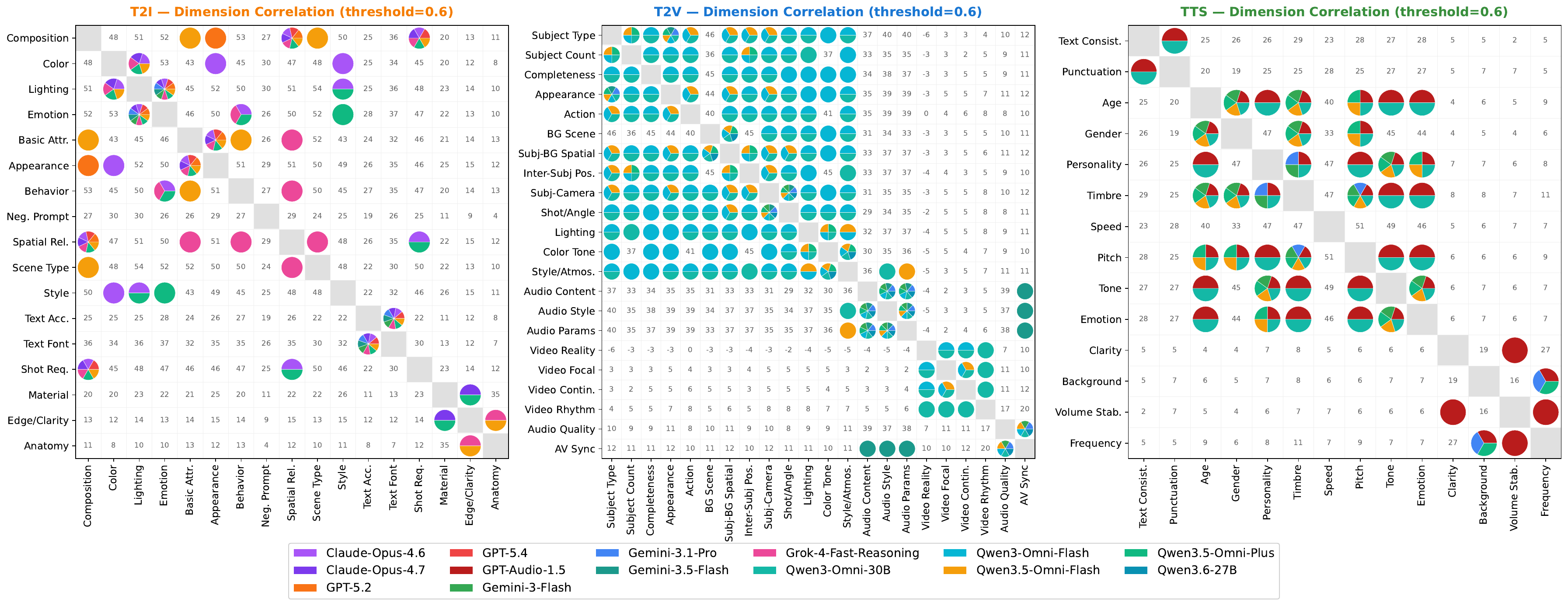}
\caption{Predicted dimensional correlation. Each cell is a dimension
pair: a coloured pie indicates one or more judges exceed $\rho\!=\!0.6$
on that pair (each slice is one model); a white cell shows the mean
correlation ($\times100$) when no judge exceeds the threshold. A
coloured cell reflects the judge coupling its own decisions, not a
property of the dimension design.}
\label{fig:corr}
\end{figure*}

\paragraph{The entangled pairs are semantically adjacent, and shared.}
Whatever the model, the high-$\rho$ pairs land on semantically adjacent
dimensions rather than at random: text-accuracy with text-typesetting
on T2I, the audio-content trio on T2V, and the speaker-identity
attributes (pitch, timbre, age, gender) on TTS. This is a shared
capability limit rather than an idiosyncrasy of one judge. Which pairs
a judge fails to separate, however, is model-specific: T2I
colour-with-lighting is coloured for Claude, Grok and Qwen but stays
white for GPT and Gemini, so the same pair is entangled or not
depending on the judge.

\paragraph{Entanglement density varies sharply across judges.}
On T2V the entangled-pair count spans more than an order of magnitude within the same task, from $5$--$8$ for the most decoupled judges to $67$ and $74$ for
the two most entangled (a more-than-tenfold gap, $16$--$18$ of the
latter strong). Notably, this range runs \emph{within} a family and tracks version
iteration: entanglement falls steeply across Qwen generations, from
$67$--$74$ pairs for Qwen-3-Omni to $11$--$23$ for Qwen-3.5-Omni and
only $7$ for Qwen-3.6-27B (on par with any Gemini judge), so decoupling
is a judge-level capability that newer releases visibly acquire rather
than a property of the vendor.
The most entangled judges compress much of the prompt-related half of
T2V into a handful of meta-judgments while still reporting
$\sim\!70\%$ macro accuracy, the pseudo-decoupling signature.
On TTS the most entangled judges are GPT-Audio-1.5 ($22$ pairs, $7$
strong) and the re-run Qwen-3-Omni-30B ($19$ pairs, $5$ strong), and
GPT-Audio-1.5 is also the macro-lowest TTS judge; the two Gemini TTS
judges, by contrast, stay near-orthogonal ($3$ and $7$ pairs) yet only
reach the middle of the TTS table, so low coupling is necessary but not
sufficient for a high rank, and the diagnosis should be read together with the per-dimension competence of Sec.~\ref{subsec:per_dim}.
On T2I the coupling is mild for all judges ($2$--$11$ pairs), with the
Gemini family the most decoupled ($2$ each) and Claude, Grok and
Qwen-3.5-Omni-Plus the most entangled ($11$ each).

\begin{table}[H]
\centering
\small
\begin{tabular}{l r r r}
\toprule
TTS judge & Entangled pairs ($\rho\!>\!0.6$) & Macro acc.\ & Macro rank \\
\midrule
Qwen-3.5-Omni-Plus  & $10$ & $65.70\%$ & 1 \\
Gemini-3.1-Pro      & $\phantom{0}3$ & $60.70\%$ & 2 \\
Qwen-3.5-Omni-Flash & $\phantom{0}9$ & $60.44\%$ & 3 \\
Gemini-3-Flash      & $\phantom{0}7$ & $60.18\%$ & 4 \\
Qwen-3-Omni-30B     & $19$ & $60.12\%$ & 5 \\
GPT-Audio-1.5       & $22$ & $58.71\%$ & 6 \\
\bottomrule
\end{tabular}
\caption{Entangled-pair count vs.\ macro accuracy on TTS. The most
entangled judge (GPT-Audio-1.5) is also the weakest, but the reverse
does not hold: Gemini-3.1-Pro is the most decoupled yet only mid-pack,
because it is separately floored on the audio-quality dimensions of
Sec.~\ref{subsec:per_dim}.}
\label{tab:coupling_macro}
\end{table}

\paragraph{Cross-modal synthesis.}
Each entangled cluster collapses around a single perceptual cue that
the judge has substituted for the decoupled decisions: legible text on
T2I, ``audio matches the prompt'' on T2V, and voice identity on TTS.
Whenever a macro number moves without a matching change in the coupling
map, the improvement is structural rather than capability-driven.

\begin{figure*}[!t]
\centering
\includegraphics[width=\textwidth]{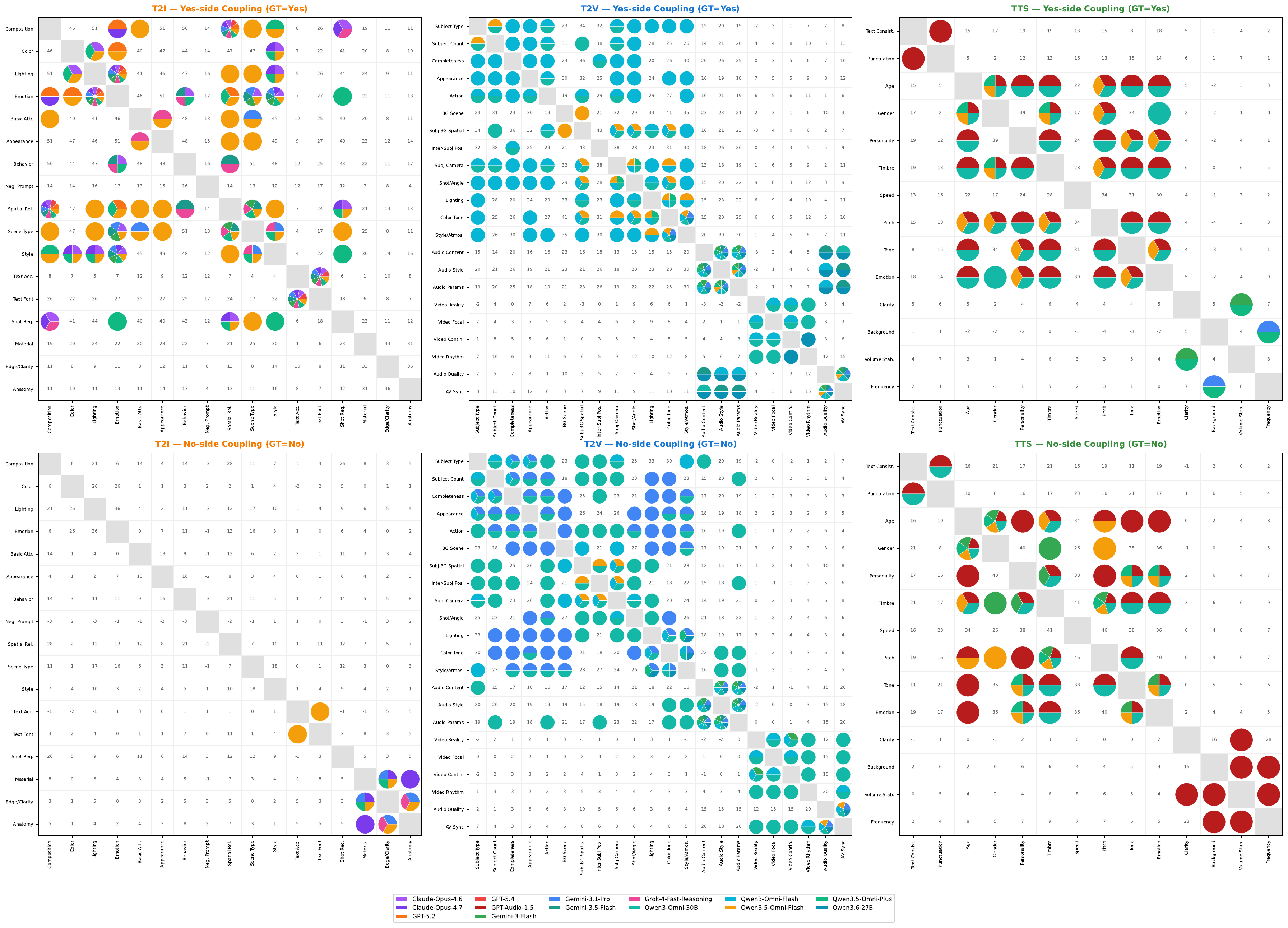}
\caption{Polarity-conditioned dimensional coupling. \textbf{Top row:}
correlation computed only on ground-truth-Yes dimension pairs
(confirmation-side halo); \textbf{bottom row:} only on ground-truth-No
pairs (defect-side halo). Glyph and threshold follow
Figure~\ref{fig:corr}; cells with insufficient same-polarity samples or
a near-constant prediction are left undetermined.}
\label{fig:corr_yesno}
\end{figure*}

\paragraph{Which polarity carries the coupling.}
Splitting this matrix by ground-truth polarity
(Figure~\ref{fig:corr_yesno}) shows that
the coupling is not symmetric across the Yes and No decisions. On T2I it
is almost entirely a confirmation-side halo: nearly every coupled cell
sits on the Yes side, and the only defect-side coupling is confined to
the fine-grained modality block (Material, Edge \& Clarity, Object
Anatomy). On T2V both sides couple heavily, and on TTS the two are
balanced. The split also exposes couplings that the mixed matrix
averages away, and judges that look independent overall yet couple on a
single polarity (for example Gemini-3-Flash on TTS Audio Clarity with
Volume Stability, $0.25$ in the mixed matrix but $0.71$ on the Yes
side). The most instructive case is Gemini-3.1-Pro on T2V: the strongest
and most balanced video judge overall, it nonetheless couples almost
exclusively on the defect side (Completeness with Subject Count rising
from $0.54$ in the mixed matrix to $0.73$ on the No side), fusing dimensions specifically when attributing a defect. The mixed
matrix of Figure~\ref{fig:corr} therefore understates the coupling.

\subsection{Per-Dimension Competence: the Chance Floor and Capability Gaps}
\label{subsec:per_dim}

\paragraph{Reading the fan.}
Figure~\ref{fig:dim} is a $53$-axis tri-sector radar that reads like a
three-bladed fan: each blade is one modality (T2I $17$ dims, T2V $22$,
TTS $14$), and within a blade the dimensions are sorted by best-model
accuracy clockwise from 12 o'clock, so travelling inward along a blade
traces each judge's capability gradient from its strongest to its
weakest dimension. The radius is per-dimension judging accuracy (inner
$30\%$ to outer $100\%$), so the outer arc collects the dimensions a
judge decides reliably and the collapsed inner region the shared blind
spots, while the spread between the four contours on one axis is a
per-dimension capability gap between judges. We overlay only the four
judges common to all three tasks (Gemini-3-Flash, Gemini-3.1-Pro,
Qwen-3.5-Omni-Flash, Qwen-3.5-Omni-Plus) so the comparison is
cross-modal, and the English and Chinese dials are drawn side by side so
a judge's cross-lingual stability can be read off directly.

\begin{figure*}[!t]
\centering
\includegraphics[width=0.85\textwidth]{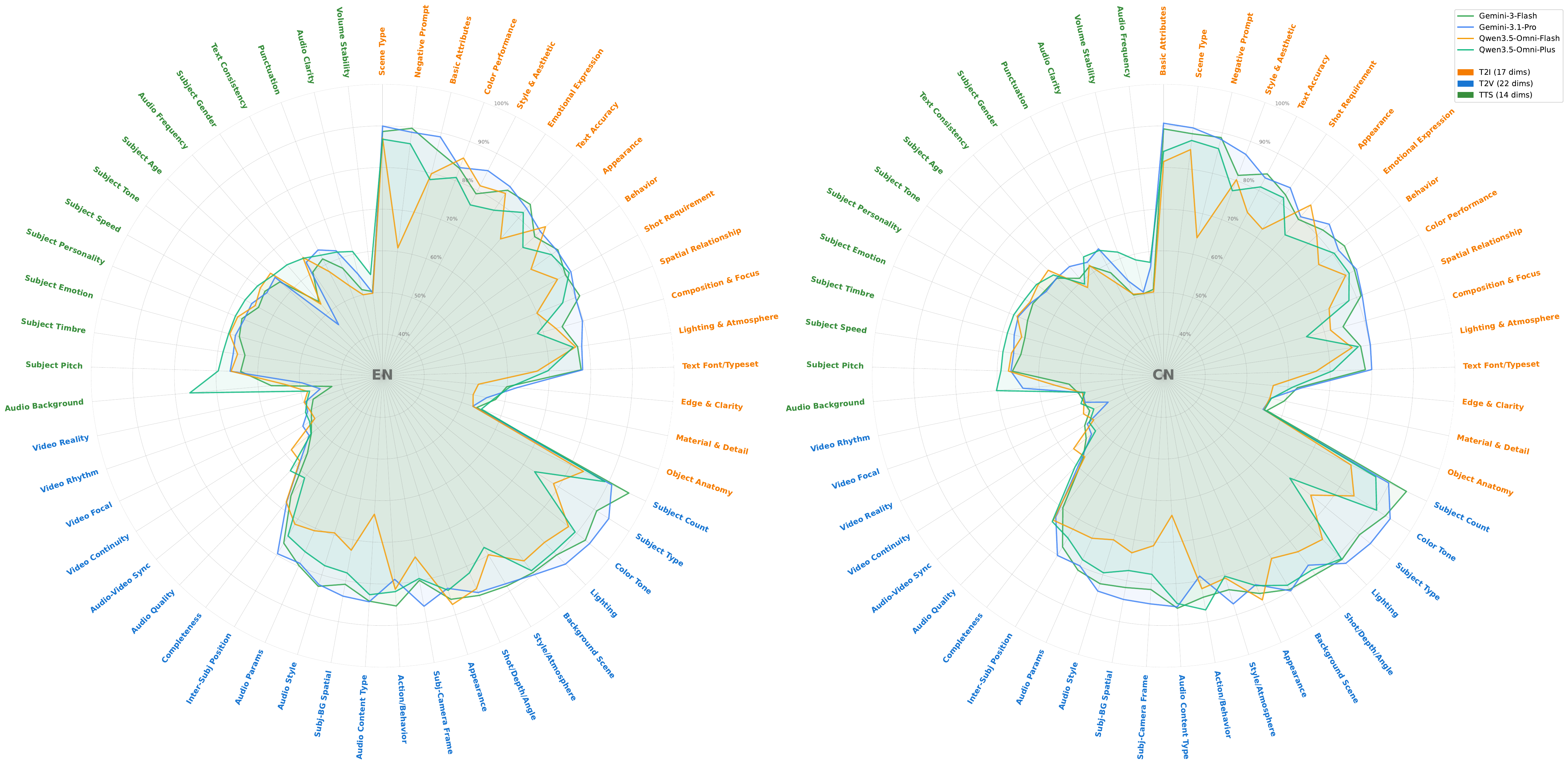}
\caption{$53$-dimension tri-sector radar for T2I ($17$ dims), T2V
($22$ dims) and TTS ($14$ dims), EN on the left and CN on the right.
Four cross-task judges are overlaid; dimensions are sorted by per-dim
best accuracy (strongest at 12 o'clock). The inner region collapsing
toward the $50\%$ ring marks shared blind spots; the spread between
contours on an axis marks a capability gap.}
\label{fig:dim}
\end{figure*}

\paragraph{Strengths are semantic; blind spots are fine-grained and
physical.}
Every sector is strongest on a semantic or categorical dimension
(T2I Scene Type $0.899$, T2V Subject Count $0.955$, TTS Audio
Background $0.765$) and weakest on a perceptual, fine-grained one, with
T2V showing the widest internal range of the three sectors, from
Subject Count ($0.955$) at the top down to its physical dimensions at
the floor. All four judges trace nearly the same blade profile, full at
the rim and pinched at the hub, so the strong-to-weak dimension ordering
is shared rather than model-specific; and the whole TTS blade retracts
inward (radii $0.60$--$0.66$) relative to the comparable T2I and T2V
blades ($0.71$--$0.79$), marking speech as the hardest modality to
judge. The single deepest blind spot is T2V Video Reality, on which the
best of the four judges reaches only $0.484$, below chance, with Video Rhythm
($0.498$) and Video Focal ($0.505$) close behind; the T2I floor is
Object Anatomy ($0.550$) and Material \& Detail ($0.577$), and the TTS
floor is Volume Stability ($0.545$) and Audio Clarity ($0.607$).
Several of these lock onto values indistinguishable from a constant
prediction, for example Qwen-3.5-Omni-Flash at $0.5001$ on TTS Audio
Clarity, the textbook signature of a dimension the judge has not
learned to evaluate.

\paragraph{The chance floor has two compatible readings, and a caveat.}
A near-$50\%$ score on a dimension can arise either because the output
is locked to an entangled companion dimension
(Sec.~\ref{subsec:coupling}) or because the judge emits a constant
token; both converge on the same conclusion, namely that the
fine-grained capability is not activated. This also qualifies the
decoupling diagnosis: a dimension can appear decoupled simply because
the judge emits a near-constant prediction on it, so a low correlation
combined with a
near-chance accuracy is spurious independence rather than genuine
independent judgment, and the two figures should be read together. A
per-polarity accuracy split
(Figure~\ref{fig:dim_yesno}) further shows the floor to be specifically a
near-constant Yes prediction rather than unbiased guessing: on
essentially every near-chance dimension the Yes accuracy is near $1$
while the No accuracy is near $0$ (T2V Video Focal $0.99$/$0.02$, TTS
Volume Stability $0.99$/$0.03$), so on these dimensions the judge is
closer to a rubber stamp than to a detector. A judge that lacks an input modality
altogether falls onto the same floor: Qwen-3.6-27B, which has no audio
channel, sits near chance on all five of its T2V audio dimensions
($0.46$--$0.58$), splitting into a constant-Yes default on the
intrinsic quality and sync dimensions and near-chance responses on the
prompt-alignment ones (Appendix~\ref{tab:qwen36_audio}).

\paragraph{The English and Chinese dials nearly coincide.}
Because the fan is drawn once per language, a judge's cross-lingual
stability is legible as the near-coincidence of its two dials: the
English and Chinese blades overlap almost exactly, the strong-to-weak
ordering along each blade is preserved (Video Reality stays pinned at
the T2V hub in both), and the residual difference is a mild
Chinese-favoring shift confined to a few audio dimensions rather than a
global rotation. Qwen-3.5-Omni-Flash is the most language-stable of the
four judges and Gemini-3.1-Pro the most sensitive;
Sec.~\ref{subsec:crosslingual} quantifies both across all four diagnoses.

\begin{figure*}[!t]
\centering
\includegraphics[width=\textwidth]{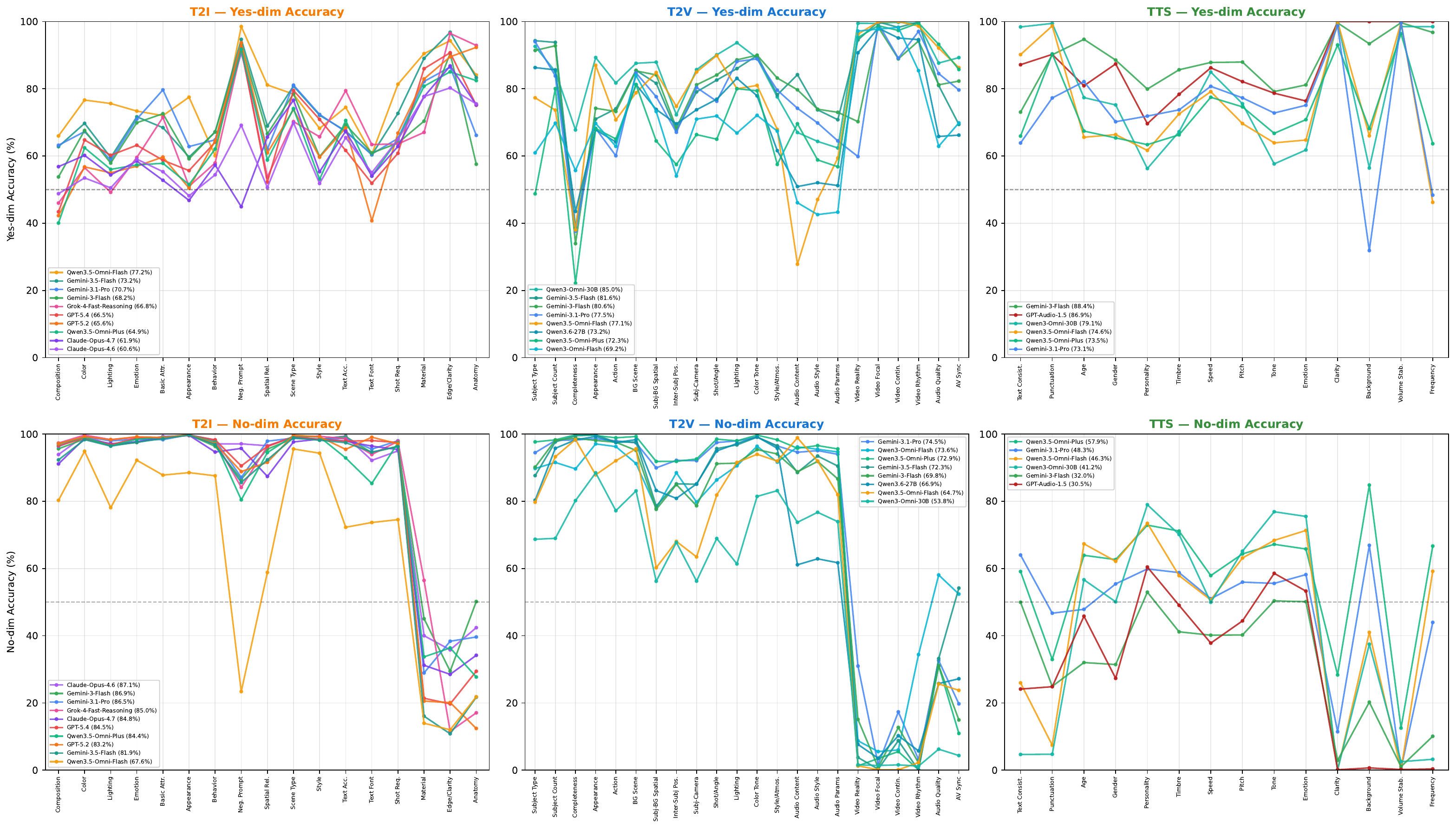}
\caption{Per-dimension Yes/No accuracy for every judge on T2I/T2V/TTS.
\textbf{Top row:} Yes accuracy (recall on ground-truth-Yes dimensions);
\textbf{bottom row:} No accuracy (specificity on ground-truth-No
dimensions). A dimension whose Yes row is high while its No row is near
zero is a near-constant ``Yes'' predictor, which averages to the
near-chance radius seen in Figure~\ref{fig:dim}.}
\label{fig:dim_yesno}
\end{figure*}

\paragraph{Aggregating over $\mathcal{D}_p$ versus $\mathcal{D}_m$.}
Averaging the per-dimension accuracies over the prompt-related and
modality-related halves defined in Sec.~\ref{sec:dimensions}
(T2I $\mathcal{D}_m\!=\!\{$D15--D17$\}$,
T2V $\mathcal{D}_m\!=\!\{$D17--D22$\}$,
TTS $\mathcal{D}_m\!=\!\{$D11--D14$\}$) is reported in the
$\mathcal{D}_p$/$\mathcal{D}_m$ columns of Table~\ref{tab:dp_dm_macro}. On T2I and T2V every judge scores
$18$--$35$ points lower on $\mathcal{D}_m$ than on $\mathcal{D}_p$,
with the largest drop reached by Gemini-3-Flash on T2V
($84.81\%$ vs.\ $49.52\%$, $-35.29$). On TTS the picture inverts the
old expectation: the modality gap is smallest not for the Gemini judges
but for Qwen-3.5-Omni-Plus ($66.33\%$ vs.\ $64.20\%$, only $-2.12$),
the one judge that has genuinely learned the audio-quality block, while
both Gemini judges keep a normal $-10$ to $-15$ point gap. The Yes/No
split (Figure~\ref{fig:dim_yesno}) shows this reflects a genuine ability rather than
a fortunate label prior: on the audio-quality dimensions the other judges
collapse to a near-constant ``clean'' verdict (No-accuracy $\approx\!0$) and rarely flag degraded audio, whereas
Qwen-3.5-Omni-Plus alone stays two-sided where the signal permits
(No-accuracy $0.85$ and $0.67$ on two of the four), genuinely
separating clean from degraded speech. The macro
average collapses these qualitatively different
failure modes into one number: without dimension parity, the long tail
of easy prompt dimensions would dilute the $\mathcal{D}_m$ floor and a
$20$--$35$ point cliff would look like a few-point dip.

\paragraph{Cross-modal synthesis, and a reading caveat.}
The radar reduces the per-dimension diagnosis to one rule: T2I and T2V
share a bottom-region floor confined to $\mathcal{D}_m$ and large
top-region spreads on $\mathcal{D}_p$, so improving these judges is
mostly a matter of activating dormant prompt-side capabilities. A
radius on this chart, however, is a Yes/No mixed average and cannot by
itself tell whether a high value reflects genuine detection or mere
confirmation of the many satisfied dimensions; that distinction
requires the perfect-match view of Sec.~\ref{subsec:yesno}. Read this
way, the moderate TTS radii ($0.60$--$0.66$) are especially deceptive.

\subsection{Cross-Lingual Robustness (English vs.\ Chinese)}
\label{subsec:crosslingual}

All four diagnoses were run in English and Chinese; only the English
figures are shown in the main text, and the Chinese-language
counterparts of Figures~\ref{fig:multi_total}--\ref{fig:dim} are
deferred to the appendix. The picture is language-invariant. The macro
leader is the same in both languages on every task (Gemini-3.1-Pro on
T2I, Gemini-3.5-Flash on T2V, Qwen-3.5-Omni-Plus on TTS), and per-model
accuracy barely moves across languages: on the $53$-dimension radar the
English and Chinese means of any judge differ by at most $0.022$ (the
largest being Gemini-3.1-Pro on TTS, $0.607$ vs.\ $0.629$), with Chinese
marginally higher on the visual tasks. Every structural conclusion
survives the switch: Video Reality remains the deepest blind spot in
both languages ($0.484$ in English, $0.513$ in Chinese); the No-class
collapse on TTS persists (high-score No perfect-match $2.67\%$ in
English vs.\ $1.64\%$ in Chinese, against $44.9\%$ and $45.3\%$ on T2I);
and the most entangled video judges remain the most entangled in both
languages ($74$ then $65$ pairs), with only marginal count changes.

The residual language sensitivity is concentrated on a handful of
fine-grained audio and temporal dimensions rather than spread across the
taxonomy. The largest per-dimension swings are all Chinese-favoring and
sit on TTS Audio Background (Gemini-3.1-Pro, $0.494$ to $0.639$), TTS
Audio Frequency (Gemini-3.1-Pro, $0.462$ to $0.564$) and T2V
Completeness (Qwen-3.5-Omni-Plus, $0.609$ to $0.741$); the Chinese
matrices are also marginally more entangled on the same
semantically-adjacent pairs, without changing the decoupling structure.
D\textsuperscript{3}-Omni thus yields the same capability diagnosis in
both languages, and the few cross-lingual gaps are themselves
diagnostic, isolating a small set of speaker-timbre and audio-frequency
dimensions on which a judge's competence is language-dependent.

\subsection{Synthesis: From Diagnosis to Construction}
\label{subsec:synthesis}

The four diagnostic axes converge on a consistent picture. We summarize
four shortcomings and, for each, indicate how it should steer the
construction of training and evaluation data (Sec.~\ref{sec:method}),
so that any lab can re-aim the same pipeline at the exact gap diagnosed
in its own model.

\paragraph{The failure modes are industry-wide, not vendor-specific.}
Before mapping the shortcomings, we stress that these behaviors
recur across every judge and, crucially, across vendors. The U-shaped
competence collapse on the mixed-quality middle, the regression-to-mean calibration bias that over-scores low-quality content and under-scores high-quality content
(Sec.~\ref{subsec:segment}), and the temporal-endpoint asymmetry by
which a judge confirms all-good content far more readily than it catches
all-bad content on T2V and TTS, all hold for the full panel rather than
for a single family; the temporal-endpoint asymmetry in particular is
positive without exception, for all eight T2V and all five TTS judges in
both languages. The clearest sign that these are properties of
current OmniJudges as a class is cross-family convergence: on the T2V
endpoint asymmetry the Gemini and Qwen families land on nearly identical
means ($+19.30$ and $+18.91$), so the confirm-heavy temporal profile is
a shared regularity of today's judges rather than one vendor's
implementation artifact. This is what makes the four shortcomings below
worth closing generically rather than model by model.

\paragraph{One thread across the four figures.}
The four diagnostics share a single reading protocol. The
segment-average accuracy of Figure~\ref{fig:multi_total} and the
per-dimension radii of Figure~\ref{fig:dim} are both means over
dimensions, so they are inflated whenever a judge merely confirms the
many satisfied dimensions of a high-score sample. Whether the few
remaining defects are actually caught, and whether each dimension is
decided on its own, is revealed only by the No perfect-match of
Figure~\ref{fig:perfect} and the decoupling map of
Figure~\ref{fig:corr}. Read together, the four figures separate a
genuine capability from a confirmation artifact, and the gap between the
two widens from vision to speech: on high-score samples the No
perfect-match falls from $44.9\%$ on T2I to $23.3\%$ on T2V and $2.7\%$
on TTS, so an aggregate score can stay high precisely where defect
attribution has collapsed. A per-polarity decomposition
(Figures~\ref{fig:corr_yesno} and~\ref{fig:dim_yesno}) sharpens both halves of this thread: the
mean-inflation is specifically a near-constant Yes prediction rather
than random guessing, and the entanglement splits into a
confirmation-side halo (dominant on T2I) and a defect-side halo
(dominant on the temporal tasks), so even a judge that looks decoupled
overall can fuse dimensions on one polarity alone.

\paragraph{(S1) Modality-internal perception is the floor of every
judge.}
The modality-related dimensions cluster near chance on the inner ring
of Figure~\ref{fig:dim}, with several locking onto constant-prediction
values just above $50\%$ (Sec.~\ref{subsec:per_dim}), the deepest being
T2V Video Reality at $0.484$.

\paragraph{(S2) Decisions collapse along modality-specific clusters.}
Despite competitive macro accuracy, judges fuse nominally orthogonal
dimensions into a single bit, and the cluster is task-specific: a text
pair on T2I, an audio trio on T2V, a speaker block on TTS. Wherever a
strong macro number coexists with a dense coloured block in
Figure~\ref{fig:corr}, it is only pseudo-decoupled
(Sec.~\ref{subsec:coupling}); on the temporal tasks the most entangled
judges accumulate dozens of such pairs.

\paragraph{(S3) Class priors dominate, and aggregate accuracy masks poor
attribution.}
The U-shaped curves and the TTS Yes-bias both stem from majority-class
shortcuts; more sharply, the high-score rebound in segment accuracy is
not defect localization, since the No perfect-match falls from $44.9\%$
on T2I to $2.7\%$ on TTS on the same high-score segments
(Sec.~\ref{subsec:segment}, Sec.~\ref{subsec:yesno}).

\paragraph{(S4) No OmniJudge is universal.}
The leaderboard reorders between vision and speech: the Gemini family
leads T2I and T2V but Qwen-3.5-Omni-Plus leads TTS and is the only judge
to close the audio-quality gap. Cross-modal generalization of judging
ability remains open (Sec.~\ref{subsec:setup}).

\paragraph{How the diagnoses steer data construction.}
Read as requirements on the data rather than as a scoreboard, the four
shortcomings point to concrete adjustments in how training and
evaluation triplets are built. (S1) The modality floor calls for
over-representing negatives that perturb only the modality-related
dimensions $\mathcal{D}_m$ while holding the prompt fixed, so the data
rewards genuine perceptual discrimination instead of prompt-level shortcuts.
(S2) The entangled clusters call for mining triplets that violate
exactly one member of a highly correlated pair, forcing each dimension to be
decided on its own and breaking the halo that inflates macro accuracy.
(S3) The class-prior shortcuts call for score-uniform sampling---equal
mass at every total score $s\!\in\!\{0,\dots,D\}$ and a balanced
Yes/No count per dimension---with extra all-violated, low-score
negatives on the temporal tasks, where defect attribution collapses.
(S4) The cross-task reordering calls for a joint, task-agnostic
balanced schema spanning T2I/T2V/TTS, so that cross-modal judging is
trained for rather than assumed. These are precisely the decoupled,
dual-balanced, and dynamic construction moves of
Sec.~\ref{sec:method}, now aimed at the specific cells each model
fails.

\paragraph{Closing the loop.}
We intend these diagnoses as a constructive starting point for
improvement rather than a final assessment of any model. The
same dynamic, dual-balanced, decoupled construction can be re-aimed at
whichever cells of Figures~\ref{fig:multi_total}--\ref{fig:dim} a given
judge is weakest on, turning a diagnostic benchmark into a broad recipe
for supplementary training data. We also state a limitation plainly:
the seed corpus and the pool of negative-construction operators used
here are our own and far from exhaustive, so the coverage of any single
effort---including ours---is necessarily partial. We therefore
encourage the community to plug in its own seed pools and its own
libraries of negative-construction operators, broadening and enriching
the balanced benchmark so that the same method surfaces a more
complete picture of each model's blind spots. Pooled this way, the
individual diagnoses can grow into a genuinely collective multimodal
diagnostic lens that keeps pace with Omni-LLMs as they evolve.

%% file: conclusion.tex
\section{Conclusion}
\label{sec:conclusion}

We introduced \textbf{D\textsuperscript{3}-Omni}, the first balanced
and decoupled benchmark for diagnosing fine-grained multimodal
understanding in present-day OmniJudges. Built through the
\textbf{D\textsuperscript{3}} construction framework
(Sec.~\ref{sec:method}), the benchmark combines
\textbf{Dual-balanced} sampling that closes the negative-sample gap
and enforces near $1\!:\!1$ Yes/No parity on every dimension,
\textbf{Decoupled} orthogonal-flip operators that fix verified
positive seeds and perturb one dimension at a time, and a
\textbf{Dynamic} construction loop that steers generation toward the
sparsest regions of the label space. The released benchmark spans
T2I, T2V and TTS with $53$ near-orthogonal binary dimensions
($17$/$22$/$14$) over $10{,}671$ samples and a uniform distribution
across all total-score levels, while remaining a stable evaluation
set.

Applied to ten T2I, eight T2V and six TTS judges, the same balanced
lens converts an apparently saturated leaderboard
into four mutually reinforcing diagnoses
(Sec.~\ref{subsec:segment}--\ref{subsec:yesno}, synthesized in
Sec.~\ref{subsec:synthesis}):
\textbf{(S1) modality-internal perception is the universal floor},
with $\mathcal{D}_m$ macro accuracies floored near $50\%$ on every task
and an exact $0.500x$ constant-prediction signature on TTS Audio
Clarity;
\textbf{(S2) decisions collapse along modality-specific clusters},
with the most entangled video judges compressing much of the prompt
half into a few meta-judgments---dozens of entangled T2V dimension
pairs---while still reporting $\sim\!70\%$ macro accuracy;
\textbf{(S3) class priors dominate when the sample regime changes},
producing U-shaped or monotonic accuracy curves under which every judge
confirms satisfied content far more reliably than it detects violated
content, leaving the No
class almost unlabeled on TTS (No perfect-match $\leq\!12.24\%$ across
all six judges, and only $2.7\%$ even on high-score segments, versus
$44.9\%$ on T2I); and \textbf{(S4) no OmniJudge generalizes across
modalities}, with the TTS podium led by a different family than the
visual tasks (Gemini-3.1-Pro $78.56\%$ on T2I but $60.70\%$ on TTS,
while Qwen-3.5-Omni-Plus leads TTS at $65.70\%$). Crucially, all four
recur across model families rather than isolating a single vendor, so
they characterize current OmniJudges as a class. Together, these results
identify the gap between \textbf{OmniJudge} and \textbf{OmniBias}:
aggregate accuracy can be high while the underlying capability
distribution is shallow, entangled and class-prior-driven, and only a
benchmark balanced on both dimensions and total scores can surface
that gap.

The same construction pipeline is the natural prescription for the
diagnosis it surfaces. The orthogonal-flip operator targets S1 by
generating one-hot-$\mathcal{D}_m$ negatives that share the same
prompt as a positive seed, the attainability guarantee
(Lemma~\ref{thm:existence}) targets S2 by mining triplets that violate
exactly one member of an entangled pair, the score-uniformity
constraint targets S3 by removing the empirical reward for all-Yes /
all-No shortcuts, and the task-agnostic schema targets S4 by enabling
joint balanced fine-tuning across T2I, T2V and TTS without bespoke
per-modality pre-processing. D\textsuperscript{3}-Omni therefore
delivers more than a leaderboard: it is a closed-loop instrument that
turns each diagnosed weakness into an actionable training-data
prescription, and, since our own seed corpus and pool of
negative-construction operators are inevitably partial, we invite the
community to plug in its own seed pools and operator libraries,
generate dimension- and score-balanced triplets aimed at each model's
weakest cells of Figures~\ref{fig:multi_total}--\ref{fig:perfect}, and
grow these individual diagnoses into a genuinely collective multimodal
diagnostic lens.

\paragraph{Limitations and future work.}
Three directions complement the present study. First, the current
benchmark covers T2I, T2V and TTS; extending the same balanced
decoupled construction to image-to-X, video-to-X and any-to-text
judging would test whether the four shortcomings persist in the
reverse direction. Second, sample-level explainability that pairs
each diagnostic curve with a representative example would make the
diagnoses inspectable without re-running every judge; scaling this to
a public gallery is a natural next step. Third, the dynamic loop is currently triggered
by sparsity in the label distribution; coupling it directly to a
target judge's most recent error profile would turn
D\textsuperscript{3}-Omni from a stable evaluation set into an
adaptive training-data generator, closing the diagnose-to-construct
loop in real time.

%% file: appendix.tex
\section{Additional Details}

Additional details can be placed in the appendix.

\subsection{Proof of Lemma~\ref{thm:existence}}
\label{app:proof}

\begin{proof}[Proof of Lemma~\ref{thm:existence}]
Write $D \equiv D_\tau$. We give an explicit construction; throughout, a
sample is identified with its \emph{flip set} $T \subseteq \{1,\dots,D\}$
of zeroed dimensions, so its label vector is $y_d = \mathbf{1}[d \notin T]$
and its score is $s = D - |T|$.

\noindent\emph{Step~1 (target multiset of flip sizes).} Score
uniformity~\eqref{eq:score_balance} requires exactly $M$ samples at each
score $s \in \{0,\dots,D\}$, i.e.\ $M$ flip sets of size $D-s$ for each
$s$. Enumerate these $M(D+1)$ required sizes in any order as
$\ell_1,\dots,\ell_{M(D+1)}$, where the value $D-s$ occurs $M$ times for
each $s$. Their total is
\[
T_{\mathrm{tot}} \;=\; \sum_{r}\ell_r \;=\; M\sum_{s=0}^{D}(D-s) \;=\; M\,\frac{D(D+1)}{2}.
\]

\noindent\emph{Step~2 (cyclic-arc placement).} Identify the dimensions
with $\mathbb{Z}_D$. Assign to the $r$-th sample the length-$\ell_r$
cyclic arc
\[
T_r \;=\; \bigl\{\,(a_r + k)\bmod D : k = 0,\dots,\ell_r-1\,\bigr\}+1,
\qquad a_r \;=\; \Bigl(\textstyle\sum_{q<r}\ell_q\Bigr)\bmod D,
\]
so that each arc starts exactly where the previous one ended and the
arcs tile the cycle end-to-end. By (A1) pick any seed
$(p_0^+,x_0^+)\in\mathcal{S}_{\mathrm{full}}^{(\tau)}$; by (A2)+(A3) the
composite operator $\mathcal{F}_{T_r}=\bigcirc_{d\in T_r}\mathcal{F}_d$
realizes the flip set exactly, yielding a sample with
$y_d=\mathbf{1}[d\notin T_r]$ and score $D-\ell_r$. Collect all $M(D+1)$
samples into $\mathcal{B}$.

\noindent\emph{Step~3 (both constraints hold).} By Step~1 each score
$s$ is realized exactly $M$ times, so~\eqref{eq:score_balance} holds.
For~\eqref{eq:dim_balance}, note that laying consecutive arcs of total
length $T_{\mathrm{tot}}$ around a cycle of size $D$ covers each of the
$D$ positions exactly $T_{\mathrm{tot}}/D$ times whenever
$D \mid T_{\mathrm{tot}}$. Here $T_{\mathrm{tot}}/D = M(D+1)/2$, which is
an integer precisely because $M(D+1)$ is even (the lemma's
hypothesis). Hence every dimension is flipped (labeled $0$) exactly
$M(D+1)/2$ times, so it is labeled $1$ exactly
$M(D+1)-M(D+1)/2 = M(D+1)/2 = N/2$ times, which
is~\eqref{eq:dim_balance}.

\noindent\emph{Step~4 (consistency).} The two constraints imply the
same total positive count, $\sum_d N/2 = \tfrac{MD(D+1)}{2} = \sum_{s=0}^{D} M s$,
confirming that they are mutually compatible and jointly satisfied by
the explicit $\mathcal{B}$ above. \qedhere
\end{proof}

\subsection{Full Evaluation Dimension Tables}
\label{app:dimension_tables}

The complete lists of evaluation dimensions, with category assignments and per-dimension descriptions, are given below for the three generation scenarios used in this work. These tables enumerate the atomic binary requirements that constitute the decoupled taxonomy described in Section~\ref{sec:dimensions}.

\input{table_t2i_dimensions}
\input{table_t2v_dimensions}
\input{table_tts_dimensions}

\subsection{Modality-Side Atomic Operators}
\label{app:atomic_operators}

This subsection lists, by task, the modality-side atomic operators $\mathcal{O}_{d^*}^{(\tau)}$ used in Step~3 of \S\ref{subsec:method_decoupling} to flip a single modality-related dimension while leaving every other dimension intact. Each operator was validated on a held-out audit set to confirm dimensional isolation.

\paragraph{T2V atomic operators (6).}
\begin{enumerate}
  \item Subject / background warping, jitter and drift via SAM3-based instance segmentation with adaptive in-painting;
  \item Texture-focus dynamic blur via bilateral filtering driven by a dynamic high-frequency-region detector;
  \item Frame-order shuffling and inter-frame flicker via randomized frame whitening and re-ordering;
  \item Implausible re-timing via $2\!\times$ speed masking on the audio-video stream;
  \item Audio-track Gaussian noise injection;
  \item Audio--visual desynchronization via audio-track separation and temporal offset.
\end{enumerate}

\paragraph{T2I atomic operators (3).}
\begin{enumerate}
  \item Greasy-texture distortion via frequency-domain filtering;
  \item Image blurring via Gaussian blur;
  \item Anatomical (limb / joint) deformation via keypoint detection followed by localized smearing.
\end{enumerate}

\paragraph{TTS atomic operators (4).}
\begin{enumerate}
  \item Muffled voice via bilateral filtering on the spectrogram;
  \item Background reverberation via audio replay plus additive noise;
  \item Volume instability via random smoothed gain modulation;
  \item High-frequency squeal / low-frequency mud via targeted frequency-domain manipulation.
\end{enumerate}

\clearpage
\subsection{Evaluation Prompt Templates}
\label{app:eval_prompts}

This subsection reproduces the verbatim English prompt templates used to elicit every per-dimension Yes/No score reported in Section~\ref{sec:experiments}. Each prompt is sent to every judge of the corresponding task without modification; the only field substituted at evaluation time is the user-side prompt (and, for TTS, the speaker instruction and the spoken text).

\subsubsection{Text-to-Image Judging Prompt (17 dimensions)}
\label{app:eval_t2i}

\begin{lstlisting}
You are a Text-to-Image quality evaluation expert. Please carefully examine the provided image and objectively evaluate the quality of the generated image based on the corresponding image generation prompt.

## Evaluation Task
Please evaluate based on the following inputs:
- **Image Generation Prompt**: {{image_generation_prompt}}
- **Image to Evaluate**: The provided image

After carefully examining the image, assess each of the following 17 dimensions to determine whether the image content meets the requirements of the generation prompt and quality standards:

- Dimension 1 Composition & Focus: Does the composition of the image match the prompt?
- Dimension 2 Color Performance: Does the color palette of the image match the prompt?
- Dimension 3 Lighting & Atmosphere: Does the lighting effectively create the specific atmosphere described in the prompt?
- Dimension 4 Emotional Expression: Does the overall emotion conveyed by the image precisely match the emotional requirement specified in the prompt?
- Dimension 5 Basic Attributes: Does the core identity of the subject--including count, category, gender, and other basic attributes--match the prompt?
- Dimension 6 Appearance: Do the visual appearance details of the subject--such as color, material, shape, clothing, and hairstyle--match the prompt?
- Dimension 7 Behavior: Does the subject's state and behavior--including facial expression, posture, and action--match the prompt?
- Dimension 8 Negative Prompt: Does the image successfully avoid elements explicitly excluded in the prompt (e.g., "no red", "no cars")?
- Dimension 9 Spatial Relationship: Do the spatial positions, layout, and occlusion relationships among objects or subjects conform to the prompt?
- Dimension 10 Scene Type: Does the scene type in the image match the one specified in the prompt?
- Dimension 11 Style & Aesthetic Adherence: Does the image accurately reflect the specific artistic style, design movement, or aesthetic system specified in the prompt?
- Dimension 12 Text Accuracy: Is any text in the image clear, legible, and free of garbled characters or typos?
- Dimension 13 Text Font/Typesetting: Do the font choice and typesetting (e.g., centering, line spacing) of the text match the prompt requirements?
- Dimension 14 Shot Requirement: Does the image accurately reflect the shot size, camera angle, or specific photographic effect (e.g., bokeh, fisheye) specified in the prompt?
- Dimension 15 Material & Detail: Is the surface material--such as human skin, fabric, or metal--rendered with realistic texture and rich detail?
- Dimension 16 Edge & Clarity: Are object contours and edges sharp, clean, and free from unnatural blurring or blending?
- Dimension 17 Object Anatomy: Are the structures of subjects--whether objects or people--natural and plausible? For humans, are facial proportions, body anatomy, and hands accurate and anatomically correct?

## Output Format
Provide a Yes or No answer for each dimension in order, formatted as a JSON array. Ensure the results are arranged in the order of the dimensions:
["Yes", "No", "Yes", ...] (17 elements in total)

## Notes
1. You must strictly return the result in JSON array format, without any additional explanatory text
2. The array must contain exactly 17 elements, each being either "Yes" or "No"
3. Please first examine the image thoroughly, then evaluate each item against the prompt and quality requirements
4. The evaluation must be objective, based on the degree of match between the actual image content and the prompt, as well as the actual image quality
\end{lstlisting}

\subsubsection{Text-to-Video Judging Prompt (22 dimensions)}
\label{app:eval_t2v}

\begin{lstlisting}
You are a video quality assessment expert. Please carefully watch the provided video and objectively evaluate the quality of the generated video based on the corresponding video generation prompt.

## Evaluation Task
Please evaluate based on the following two inputs:
- **Video Generation Prompt**: {{video_generation_prompt}}
- **Video to Evaluate**: The provided video

After carefully watching the video, please evaluate the video content against the generation prompt requirements and quality requirements based on the following 22 dimensions:

- Dimension 1 Subject Type: Does the type of main subject in the video match the one specified in the prompt?
- Dimension 2 Subject Count: Does the number of main subjects in the video match the count specified in the prompt?
- Dimension 3 Completeness (Extra/Missing): Does the video contain all required elements and no extra or missing subjects as described in the prompt?
- Dimension 4 Appearance (Color/Shape/Clothing): Do the visual attributes of the subject--such as color, shape, size, and clothing--match those specified in the prompt?
- Dimension 5 Action/Behavior/State: Does the subject perform the correct action, behavior, or state described in the prompt?
- Dimension 6 Background Scene Type: Does the type of background scene in the video match the one specified in the prompt?
- Dimension 7 Subject-Background Spatial Relation: Is the spatial relationship between the subject and the background consistent with the prompt?
- Dimension 8 Inter-Subject Relative Position: Are the relative positions among multiple subjects (or key visual elements) consistent with the prompt?
- Dimension 9 Subject-Camera Position/Framing: Is the subject's position relative to the camera (e.g., framing, distance, orientation) consistent with the prompt?
- Dimension 10 Shot Parameters (Shot/Depth/Angle/Motion): Do the shot parameters--including shot size (e.g., close-up, wide), depth of field, camera angle, and motion--match those specified in the prompt?
- Dimension 11 Lighting (Direction/Source/Quality): Does the lighting direction, source, and quality (e.g., soft/hard, natural/artificial) match the prompt?
- Dimension 12 Color Tone (Hue/Sat/Bright): Are the overall hue, saturation, and brightness of the video appropriate and consistent with the prompt?
- Dimension 13 Video Style/Atmosphere: Does the visual style and overall atmosphere of the video match the one described in the prompt?
- Dimension 14 Audio Content Type: Does the type of audio content in the video (e.g., speech, music, ambient sound) match the prompt?
- Dimension 15 Audio Style: Does the stylistic character of the audio (e.g., cheerful, somber, tense) align with the prompt?
- Dimension 16 Audio Parameters: Do the audio parameters--such as volume, speaking rate, or instrument choice--match those specified in the prompt?
- Dimension 17 Video Reality: Whether video has no unreasonable errors in subject or background, such as distortion, deformation, drift, or jitter?
- Dimension 18 Video Focal: Whether video is accurately focused with clear texture and no unreasonable blur?
- Dimension 19 Video Continuity: Whether video has no frame-to-frame flickering?
- Dimension 20 Video Rhythm: Whether action rhythm is reasonable (no sudden speed changes)?
- Dimension 21 Audio Quality: Whether audio in video is clear with background noise controlled reasonably?
- Dimension 22 Audio-Video Sync: Whether audio and video are synchronized?

## Output Format
Please provide Yes or No answers for each dimension in order (22 answers total, only use Yes or No, answer Yes if not violated or not applicable), in JSON array format, ensuring results are arranged in dimension order:
["Yes", "No", "Yes", ...] (22 elements total)

## Notes
1. Must return strictly in JSON array format, do not include any other explanatory text
2. Array must contain exactly 22 elements, each element can only be "Yes" or "No"
3. Please watch the video completely first, then evaluate each item independently against the prompt and quality requirements
4. Evaluation must be objective, based on the match between actual video content and prompt, and actual video quality
\end{lstlisting}

\subsubsection{Text-to-Speech Judging Prompt (14 dimensions)}
\label{app:eval_tts}

\begin{lstlisting}
You are a TTS generated voice quality evaluation expert. Please carefully listen to the provided audio, and objectively evaluate the quality of the generated voice audio based on the corresponding audio generation prompts (speaker setting instruction <prompt_instruction> and speaking content <prompt_text>).

## Evaluation Task
Please evaluate based on the following inputs:
- **Audio Generation Prompt: Speaker Setting Instruction**: <prompt_instruction>
- **Audio Generation Prompt: Speaking Content**: <prompt_text>
- **Audio to Evaluate**: The provided audio

Please carefully listen to the audio, and evaluate whether the audio content and quality meet the requirements of the generation prompts and quality standards across the following 14 dimensions:

- Dimension 1 Text Consistency: Does the spoken content in the audio match the text specified in the Audio Generation Prompt: Speaking Content?
- Dimension 2 Punctuation Consistency: Does the phrasing and pausing in the audio reflect the punctuation and sentence structure in the Audio Generation Prompt: Speaking Content?
- Dimension 3 Subject Age: Does the speaker's perceived age in the audio match the age specified in the Audio Generation Prompt: Speaker Setting Instruction?
- Dimension 4 Subject Gender: Does the speaker's perceived gender in the audio match the gender specified in the Audio Generation Prompt: Speaker Setting Instruction?
- Dimension 5 Subject Personality: Does the speaker's vocal expression convey the personality described in the Audio Generation Prompt: Speaker Setting Instruction?
- Dimension 6 Subject Timbre: Does the speaker's voice timbre align with the timbre described or implied in the Audio Generation Prompt: Speaker Setting Instruction?
- Dimension 7 Subject Speed: Does the speaking rate in the audio match the speed specified or implied in the Audio Generation Prompt: Speaker Setting Instruction?
- Dimension 8 Subject Pitch: Does the speaker's vocal pitch match the pitch described or implied in the Audio Generation Prompt: Speaker Setting Instruction?
- Dimension 9 Subject Tone: Does the intonation and vocal attitude align with the tone specified in the Audio Generation Prompt: Speaker Setting Instruction?
- Dimension 10 Subject Emotion: Does the emotional state expressed in the voice match the emotion specified in the Audio Generation Prompt: Speaker Setting Instruction?
- Dimension 11 Audio Clarity: Is the human voice in the audio clear?
- Dimension 12 Audio Background: Is there reverberation or background noise in the audio?
- Dimension 13 Audio Volume Stability: Is the audio volume stable?
- Dimension 14 Audio Frequency: Are there any anomalies in the audio frequency?

## Output Format
Please provide a Yes or No answer for each dimension in order, formatted as a JSON array. Ensure the results are arranged in the order of the dimensions:
["Yes", "No", "Yes", ...] (14 elements in total)

## Notes
1. Must strictly return in JSON array format, without any other explanatory text
2. The array must contain exactly 14 elements, each of which can only be "Yes" or "No". If the audio content does not contain the situation described by the dimension at all, answer "Yes"
3. Please listen to the audio completely first, then evaluate each item against the prompts and the audio's own quality
4. The evaluation must be objective, based on the degree of matching between the actual audio content and the prompts, and the actual quality of the audio

# User Input to Process
<prompt_instruction>{{prompt_instruction}}</prompt_instruction>
<prompt_text>{{prompt_text}}</prompt_text>
\end{lstlisting}

\subsection{Polarity-Resolved Judging Diagnostics}
\label{app:polarity}

The two diagnostics below refine the main-text analysis by splitting
every per-dimension quantity by the ground-truth polarity, separating a
judge's ability to \emph{confirm} satisfied requirements (the Yes side)
from its ability to \emph{detect} violated ones (the No side). Both use
the English runs; the Chinese counterparts are qualitatively identical.

\paragraph{Per-dimension Yes/No accuracy.}
Figure~\ref{fig:dim_yesno} reports, for every judge and dimension, the
Yes accuracy (recall on ground-truth-Yes dimensions) and the No accuracy
(specificity on ground-truth-No dimensions), a per-dimension
decomposition of the perfect-match view of Figure~\ref{fig:perfect} and
of the radii of Figure~\ref{fig:dim}. It shows that the near-chance
radii of the radar are not unbiased guessing: on essentially every
near-$50\%$ dimension the two rows are far apart (for example T2V Video
Focal at Yes $0.99$/No $0.02$ and TTS Volume Stability at $0.99$/$0.03$),
the signature of a near-constant Yes prediction rather than a random
$50/50$ split.

\begin{table}[H]
\centering
\small
\begin{tabular}{l l c c}
\toprule
T2V audio dimension & Type & Yes acc. & No acc. \\
\midrule
Audio Content Type & prompt-alignment & $0.51$ & $0.61$ \\
Audio Style        & prompt-alignment & $0.52$ & $0.63$ \\
Audio Params       & prompt-alignment & $0.51$ & $0.62$ \\
Audio Quality      & modality-quality & $0.66$ & $0.26$ \\
Audio-Video Sync   & modality-quality & $0.66$ & $0.27$ \\
\bottomrule
\end{tabular}
\caption{Qwen-3.6-27B, a judge without an audio channel, on the five
T2V audio dimensions, split by ground-truth polarity. On the
prompt-alignment dimensions it stays near chance on both rows (it cannot
confirm whether the audio matches the prompt), whereas on the intrinsic
quality and sync dimensions it defaults to ``Yes'' (Yes accuracy
$0.66$, No accuracy $\sim\!0.26$), rarely flagging a defect it cannot
hear. The two behaviors give the same near-chance radius in
Figure~\ref{fig:dim} for opposite reasons.}
\label{tab:qwen36_audio}
\end{table}

\paragraph{Polarity-conditioned coupling.}
Figure~\ref{fig:corr_yesno} recomputes the off-diagonal correlation
matrix of Figure~\ref{fig:corr} separately on the dimension pairs whose
ground truth is Yes on both axes (the confirmation-side halo) and No on
both axes (the defect-side halo), attributing each coupling in the mixed
matrix to a polarity. On T2I the coupling is almost entirely a
confirmation-side halo ($31$ coupled dimension pairs are Yes-driven
against $4$ on the No side), and the only defect-side coupling is
confined to the fine-grained modality block (Material, Edge \& Clarity,
Object Anatomy); on T2V both sides couple heavily ($61$ Yes-side
against $77$ No-side pairs) and on TTS they are balanced ($22$ each).
The split also surfaces couplings that the mixed matrix averages away,
and judges that look independent overall yet couple on a single polarity
(for example Gemini-3-Flash on TTS Audio-Clarity-with-Volume-Stability,
$0.25$ in the mixed matrix but $0.71$ on the Yes side).

%% file: table_t2i_dimensions.tex
\begin{table}[H]
\centering
\resizebox{\textwidth}{!}{
\begin{tabular}{llp{10cm}}
\toprule
\textbf{Category} & \textbf{Dimension} & \textbf{Description} \\
\midrule
\multirow{14}{*}{Prompt-related}
& Composition \& Focus & Does the composition of the image match the prompt? \\
& Color Performance & Does the color palette of the image match the prompt? \\
& Lighting \& Atmosphere & Does the lighting effectively create the specific atmosphere described in the prompt? \\
& Emotional Expression & Does the overall emotion conveyed by the image precisely match the emotional requirement specified in the prompt? \\
& Basic Attributes & Does the core identity of the subject (including count, category, gender, and other basic attributes) match the prompt? \\
& Appearance & Do the visual appearance details of the subject (such as color, material, shape, clothing, and hairstyle) match the prompt? \\
& Behavior & Does the subject's state and behavior (including facial expression, posture, and action) match the prompt? \\
& Negative Prompt & Does the image successfully avoid elements explicitly excluded in the prompt (e.g., ``no red'', ``no cars'')? \\
& Spatial Relationship & Do the spatial positions, layout, and occlusion relationships among objects or subjects conform to the prompt? \\
& Scene Type & Does the scene type in the image match the one specified in the prompt? \\
& Style \& Aesthetic Adherence & Does the image accurately reflect the specific artistic style, design movement, or aesthetic system specified in the prompt? \\
& Text Accuracy & Is any text in the image clear, legible, and free of garbled characters or typos? \\
& Text Font/Typesetting & Do the font choice and typesetting (e.g., centering, line spacing) of the text match the prompt requirements? \\
& Shot Requirement & Does the image accurately reflect the shot size, camera angle, or specific photographic effect (e.g., bokeh, fisheye) specified in the prompt? \\
\midrule
\multirow{3}{*}{Modality-related}
& Material \& Detail & Is the surface material (such as human skin, fabric, or metal) rendered with realistic texture and rich detail? \\
& Edge \& Clarity & Are object contours and edges sharp, clean, and free from unnatural blurring or blending? \\
& Object Anatomy & Are the structures of subjects (whether objects or people) natural and plausible? For humans, are facial proportions, body anatomy, and hands accurate and anatomically correct? \\
\bottomrule
\end{tabular}
}
\caption{Evaluation dimensions for Text-to-Image (T2I) generation. The 17 dimensions are divided into prompt-related dimensions that assess semantic alignment with the input prompt, and modality-related dimensions that evaluate intrinsic image quality.}
\label{tab:t2i_dimensions}
\end{table}

%% file: table_t2v_dimensions.tex
\begin{table}[H]
\centering
\resizebox{\textwidth}{!}{
\begin{tabular}{llp{10cm}}
\toprule
\textbf{Category} & \textbf{Dimension} & \textbf{Description} \\
\midrule
\multirow{16}{*}{Prompt-related}
& Subject Type & Does the type of main subject in the video match the one specified in the prompt? \\
& Subject Count & Does the number of main subjects in the video match the count specified in the prompt? \\
& Completeness: Extra/Missing & Does the video contain all required elements and no extra or missing subjects as described in the prompt? \\
& Appearance: Color/Shape/Clothing & Do the visual attributes of the subject (such as color, shape, size, and clothing) match those specified in the prompt? \\
& Action/Behavior/State & Does the subject perform the correct action, behavior, or state described in the prompt? \\
& Background Scene Type & Does the type of background scene in the video match the one specified in the prompt? \\
& Subject-Background Spatial Relation & Is the spatial relationship between the subject and the background consistent with the prompt? \\
& Inter-Subject Relative Position & Are the relative positions among multiple subjects (or key visual elements) consistent with the prompt? \\
& Subject-Camera Position/Framing & Is the subject's position relative to the camera (e.g., framing, distance, orientation) consistent with the prompt? \\
& Shot/Depth/Angle/Motion & Do the shot parameters (including shot size, depth of field, camera angle, and motion) match those specified in the prompt? \\
& Lighting: Direction/Source/Quality & Does the lighting direction, source, and quality (e.g., soft/hard, natural/artificial) match the prompt? \\
& Color Tone: Hue/Sat/Bright & Are the overall hue, saturation, and brightness of the video appropriate and consistent with the prompt? \\
& Video Style/Atmosphere & Does the visual style and overall atmosphere of the video match the one described in the prompt? \\
& Audio Content Type & Does the type of audio content in the video (e.g., speech, music, ambient sound) match the prompt? \\
& Audio Style & Does the stylistic character of the audio (e.g., cheerful, somber, tense) align with the prompt? \\
& Audio Params & Do the audio parameters (such as volume, speaking rate, or instrument choice) match those specified in the prompt? \\
\midrule
\multirow{6}{*}{Modality-related}
& Video Reality & Whether the video has no unreasonable errors in subject or background, such as distortion, deformation, drift, or jitter? \\
& Video Focal & Whether the video is accurately focused with clear texture and no unreasonable blur? \\
& Video Continuity & Whether the video has no frame-to-frame flickering? \\
& Video Rhythm & Whether the action rhythm is reasonable (no sudden speed changes)? \\
& Audio Quality & Whether the audio in the video is clear with background noise controlled reasonably? \\
& Audio-Video Sync & Whether the audio and video are synchronized? \\
\bottomrule
\end{tabular}
}
\caption{Evaluation dimensions for Text-to-Video (T2V) generation. The 22 dimensions are divided into prompt-related dimensions that assess semantic alignment with the input prompt, and modality-related dimensions that evaluate intrinsic video and audio quality.}
\label{tab:t2v_dimensions}
\end{table}

%% file: table_tts_dimensions.tex
\begin{table}[H]
\centering
\resizebox{\textwidth}{!}{
\begin{tabular}{llp{10cm}}
\toprule
\textbf{Category} & \textbf{Dimension} & \textbf{Description} \\
\midrule
\multirow{10}{*}{Prompt-related}
& Text Consistency & Does the spoken content in the audio match the text specified in the prompt? \\
& Punctuation Consistency & Does the phrasing and pausing in the audio reflect the punctuation and sentence structure in the prompt? \\
& Subject Age & Does the speaker's perceived age in the audio match the age specified in the prompt instruction? \\
& Subject Gender & Does the speaker's perceived gender in the audio match the gender specified in the prompt instruction? \\
& Subject Personality & Does the speaker's vocal expression convey the personality described in the prompt instruction? \\
& Subject Timbre & Does the speaker's voice timbre align with the timbre described or implied in the prompt instruction? \\
& Subject Speed & Does the speaking rate in the audio match the speed specified or implied in the prompt instruction? \\
& Subject Pitch & Does the speaker's vocal pitch match the pitch described or implied in the prompt instruction? \\
& Subject Tone & Does the intonation and vocal attitude align with the tone specified in the prompt instruction? \\
& Subject Emotion & Does the emotional state expressed in the voice match the emotion specified in the prompt instruction? \\
\midrule
\multirow{4}{*}{Modality-related}
& Audio Clarity & Is the human voice in the audio clear? \\
& Audio Background & Is there reverberation or background noise in the audio? \\
& Audio Volume Stability & Is the audio volume stable? \\
& Audio Frequency & Are there any anomalies in the audio frequency? \\
\bottomrule
\end{tabular}
}
\caption{Evaluation dimensions for Text-to-Speech (TTS) generation. The 14 dimensions are divided into prompt-related dimensions that assess alignment with the speaker instruction and text content, and modality-related dimensions that evaluate intrinsic audio quality.}
\label{tab:tts_dimensions}
\end{table}

%% file: main.bib
@article{li2026omnibench,
  title={OmniBench: Towards The Future of Universal Omni-Language Models},
  author={Li, Yizhi and Zhang, Ge and Ma, Yinghao and Yuan, Ruibin and Zhu, Kang and Guo, Hangyu and Liang, Yiming and Liu, Jiaheng and Wang, Zekun and Yang, Jian and Wu, Siwei and others},
  journal={Advances in Neural Information Processing Systems},
  volume={38},
  year={2025}
}

@article{zhang2025omnieval,
  title={OmniEval: A Benchmark for Evaluating Omni-modal Models with Visual, Auditory, and Textual Inputs},
  author={Zhang, Yiman and Luo, Ziheng and Yan, Qiangyu and He, Wei and Jiang, Borui and Chen, Xinghao and Han, Kai},
  journal={arXiv preprint arXiv:2506.20960},
  year={2025}
}

@article{zheng2023judging,
  title={Judging LLM-as-a-Judge with MT-Bench and Chatbot Arena},
  author={Zheng, Lianmin and Chiang, Wei-Lin and Sheng, Ying and Zhuang, Siyuan and Wu, Zhanghao and Zhuang, Yonghao and Lin, Zi and Li, Zhuohan and Li, Dacheng and Xing, Eric and others},
  journal={Advances in Neural Information Processing Systems},
  volume={36},
  pages={46595--46623},
  year={2023}
}

@inproceedings{lambert2025rewardbench,
  title={RewardBench: Evaluating Reward Models for Language Modeling},
  author={Lambert, Nathan and Pyatkin, Valentina and Morrison, Jacob and Miranda, LJ and Lin, Bill Yuchen and Chandu, Khyathi and Dziri, Nouha and Kumar, Sachin and Zick, Tom and Choi, Yejin and others},
  booktitle={Findings of the Association for Computational Linguistics: NAACL 2025},
  pages={1755--1797},
  year={2025}
}

@article{yasunaga2025multimodal,
  title={Multimodal RewardBench: Holistic Evaluation of Reward Models for Vision Language Models},
  author={Yasunaga, Michihiro and Zettlemoyer, Luke and Ghazvininejad, Marjan},
  journal={arXiv preprint arXiv:2502.14191},
  year={2025}
}

@inproceedings{xiong2025llava,
  title={LLaVA-Critic: Learning to Evaluate Multimodal Models},
  author={Xiong, Tianyi and Wang, Xiyao and Guo, Dong and Ye, Qinghao and Fan, Haoqi and Gu, Quanquan and Huang, Heng and Li, Chunyuan},
  booktitle={Proceedings of the Computer Vision and Pattern Recognition Conference},
  pages={13618--13628},
  year={2025}
}

@article{jin2025omni,
  title={Omni-Reward: Towards Generalist Omni-Modal Reward Modeling with Free-Form Preferences},
  author={Jin, Zhuoran and Yuan, Hongbang and Zhu, Kejian and Li, Jiachun and Cao, Pengfei and Chen, Yubo and Liu, Kang and Zhao, Jun},
  journal={arXiv preprint arXiv:2510.23451},
  year={2025}
}

@article{wijaya2024multimodal,
  title={Multimodal Preference Data Synthetic Alignment with Reward Model},
  author={Wijaya, Robert and Nguyen, Ngoc-Bao and Cheung, Ngai-Man},
  journal={arXiv preprint arXiv:2412.17417},
  year={2024}
}

@article{chen2026advancing,
  title={Advancing Multimodal Judge Models through a Capability-Oriented Benchmark and MCTS-Driven Data Generation},
  author={Chen, Zeyu and Yao, Huanjin and Zhao, Ziwang and Yang, Min},
  journal={arXiv preprint arXiv:2603.00546},
  year={2026}
}

@misc{claude35sonnet,
  title = {Claude 3.5 Sonnet},
  author = {Anthropic},
  url = {https://www.anthropic.com/news/claude-3-5-sonnet},
  year={2024},
  note={Accessed: 2026-05-21}
}

@inproceedings{wang2024pandalm,
  title={PandaLM: An Automatic Evaluation Benchmark for LLM Instruction Tuning Optimization},
  author={Wang, Yidong and Yu, Zhuohao and Zeng, Zhengran and Yang, Linyi and Wang, Cunxiang and Chen, Hao and Jiang, Chaoya and Xie, Rui and Wang, Jindong and Xie, Xing and others},
  booktitle={The Twelfth International Conference on Learning Representations},
  year={2024}
}

@inproceedings{zhu2023judgelm,
  title={JudgeLM: Fine-tuned Large Language Models are Scalable Judges},
  author={Zhu, Lianghui and Wang, Xinggang and Wang, Xinlong},
  booktitle={The Thirteenth International Conference on Learning Representations},
  year={2025}
}

@inproceedings{li2023generative,
  title={Generative Judge for Evaluating Alignment},
  author={Li, Junlong and Sun, Shichao and Yuan, Weizhe and Fan, Run-Ze and Zhao, Hai and Liu, Pengfei},
  booktitle={The Twelfth International Conference on Learning Representations},
  year={2024}
}

@inproceedings{kim2023prometheus,
  title={Prometheus: Inducing Fine-Grained Evaluation Capability in Language Models},
  author={Kim, Seungone and Shin, Jamin and Cho, Yejin and Jang, Joel and Longpre, Shayne and Lee, Hwaran and Yun, Sangdoo and Shin, Seongjin and Kim, Sungdong and Thorne, James and others},
  booktitle={The Twelfth International Conference on Learning Representations},
  year={2024}
}

@article{cao2024compassjudger,
  title={CompassJudger-1: All-in-One Judge Model Helps Model Evaluation and Evolution},
  author={Cao, Maosong and Lam, Alexander and Duan, Haodong and Liu, Hongwei and Zhang, Songyang and Chen, Kai},
  journal={arXiv preprint arXiv:2410.16256},
  year={2024}
}

@article{xu2023imagereward,
  title={ImageReward: Learning and Evaluating Human Preferences for Text-to-Image Generation},
  author={Xu, Jiazheng and Liu, Xiao and Wu, Yuchen and Tong, Yuxuan and Li, Qinkai and Ding, Ming and Tang, Jie and Dong, Yuxiao},
  journal={Advances in Neural Information Processing Systems},
  volume={36},
  pages={15903--15935},
  year={2023}
}

@article{kirstain2023pick,
  title={Pick-a-Pic: An Open Dataset of User Preferences for Text-to-Image Generation},
  author={Kirstain, Yuval and Polyak, Adam and Singer, Uriel and Matiana, Shahbuland and Penna, Joe and Levy, Omer},
  journal={Advances in Neural Information Processing Systems},
  volume={36},
  pages={36652--36663},
  year={2023}
}

@article{wu2023human,
  title={Human Preference Score v2: A Solid Benchmark for Evaluating Human Preferences of Text-to-Image Synthesis},
  author={Wu, Xiaoshi and Hao, Yiming and Sun, Keqiang and Chen, Yixiong and Zhu, Feng and Zhao, Rui and Li, Hongsheng},
  journal={arXiv preprint arXiv:2306.09341},
  year={2023}
}

@inproceedings{zhang2024learning,
  title={Learning Multi-Dimensional Human Preference for Text-to-Image Generation},
  author={Zhang, Sixian and Wang, Bohan and Wu, Junqiang and Li, Yan and Gao, Tingting and Zhang, Di and Wang, Zhongyuan},
  booktitle={Proceedings of the IEEE/CVF Conference on Computer Vision and Pattern Recognition},
  pages={8018--8027},
  year={2024}
}

@inproceedings{lo2019mosnet,
  title={MOSNet: Deep Learning-Based Objective Assessment for Voice Conversion},
  author={Lo, Chen-Chou and Fu, Szu-Wei and Huang, Wen-Chin and Wang, Xin and Yamagishi, Junichi and Tsao, Yu and Wang, Hsin-Min},
  booktitle={Interspeech 2019},
  pages={1541--1545},
  year={2019},
  organization={ISCA}
}

@inproceedings{saeki2022utmos,
  title={UTMOS: UTokyo-SaruLab System for VoiceMOS Challenge 2022},
  author={Saeki, Takaaki and Xin, Detai and Nakata, Wataru and Koriyama, Tomoki and Takamichi, Shinnosuke and Saruwatari, Hiroshi},
  booktitle={Interspeech 2022},
  pages={4521--4525},
  year={2022},
  organization={ISCA}
}

@inproceedings{maiti2023speechlmscore,
  title={SpeechLMScore: Evaluating Speech Generation Using Speech Language Model},
  author={Maiti, Soumi and Peng, Yifan and Saeki, Takaaki and Watanabe, Shinji},
  booktitle={IEEE International Conference on Acoustics, Speech and Signal Processing (ICASSP)},
  pages={1--5},
  year={2023},
  organization={IEEE}
}

@inproceedings{pu2025judge,
  title={Judge Anything: MLLM as a Judge Across Any Modality},
  author={Pu, Shu and Wang, Yaochen and Chen, Dongping and Chen, Yuhang and Wang, Guohao and Qin, Qi and Zhang, Zhongyi and Zhang, Zhiyuan and Zhou, Zetong and Gong, Shuang and others},
  booktitle={Proceedings of the 31st ACM SIGKDD Conference on Knowledge Discovery and Data Mining V. 2},
  pages={5742--5753},
  year={2025}
}

@inproceedings{chen2024mllm,
  title={MLLM-as-a-Judge: Assessing Multimodal LLM-as-a-Judge with Vision-Language Benchmark},
  author={Chen, Dongping and Chen, Ruoxi and Zhang, Shilin and Wang, Yaochen and Liu, Yinuo and Zhou, Huichi and Zhang, Qihui and Wan, Yao and Zhou, Pan and Sun, Lichao},
  booktitle={Forty-first International Conference on Machine Learning},
  year={2024}
}

@inproceedings{huang2024vbench,
  title={VBench: Comprehensive Benchmark Suite for Video Generative Models},
  author={Huang, Ziqi and He, Yinan and Yu, Jiashuo and Zhang, Fan and Si, Chenyang and Jiang, Yuming and Zhang, Yuanhan and Wu, Tianxing and Jin, Qingyang and Chanpaisit, Nattapol and others},
  booktitle={Proceedings of the IEEE/CVF Conference on Computer Vision and Pattern Recognition},
  pages={21807--21818},
  year={2024}
}

@inproceedings{liu2024evalcrafter,
  title={EvalCrafter: Benchmarking and Evaluating Large Video Generation Models},
  author={Liu, Yaofang and Cun, Xiaodong and Liu, Xuebo and Wang, Xintao and Zhang, Yong and Chen, Haoxin and Liu, Yang and Zeng, Tieyong and Chan, Raymond and Shan, Ying},
  booktitle={Proceedings of the IEEE/CVF Conference on Computer Vision and Pattern Recognition},
  pages={22139--22149},
  year={2024}
}

@article{wu2024t2vscore,
  title={Towards A Better Metric for Text-to-Video Generation},
  author={Wu, Jay Zhangjie and Fang, Guian and Wu, Haoning and Wang, Xintao and Ge, Yixiao and Cun, Xiaodong and Zhang, David Junhao and Liu, Jia-Wei and Gu, Yuchao and Liu, Rui and others},
  journal={arXiv preprint arXiv:2401.07781},
  year={2024}
}

@inproceedings{huang2023t2icompbench,
  title={T2I-CompBench: A Comprehensive Benchmark for Open-world Compositional Text-to-image Generation},
  author={Huang, Kaiyi and Sun, Kaiyue and Xie, Enze and Li, Zhenguo and Liu, Xihui},
  booktitle={Advances in Neural Information Processing Systems},
  year={2023}
}

@inproceedings{lin2024genaibench,
  title={GenAI-Bench: Evaluating and Improving Compositional Text-to-Visual Generation},
  author={Li, Baiqi and Lin, Zhiqiu and Pathak, Deepak and Li, Jiayao and Fei, Yixin and Wu, Kewen and Ling, Tiffany and Xia, Xide and Zhang, Pengchuan and Neubig, Graham and Ramanan, Deva},
  booktitle={Synthetic Data for Computer Vision Workshop at CVPR},
  year={2024}
}

@misc{openai2025gpt5,
  title={GPT-5},
  author={OpenAI},
  year={2025},
  url={https://openai.com/gpt-5/},
  note={Accessed: 2026-05-21}
}

@misc{anthropic2025claude4,
  title={Introducing Claude Opus 4.6},
  author={Anthropic},
  year={2026},
  url={https://www.anthropic.com/news/claude-opus-4-6},
  note={Accessed: 2026-05-21}
}

@misc{google2025gemini3,
  title={A New Era of Intelligence with Gemini 3},
  author={Google},
  year={2025},
  url={https://blog.google/products-and-platforms/products/gemini/gemini-3/},
  note={Accessed: 2026-05-21}
}

@article{hurst2024gpt,
  title={GPT-4o System Card},
  author={Hurst, Aaron and Lerer, Adam and Goucher, Adam P and Perelman, Adam and Ramesh, Aditya and Clark, Aidan and Ostrow, AJ and Welihinda, Akila and Hayes, Alan and Radford, Alec and others},
  journal={arXiv preprint arXiv:2410.21276},
  year={2024}
}

@article{team2024gemini,
  title={Gemini 1.5: Unlocking multimodal understanding across millions of tokens of context},
  author={Team, Gemini and Georgiev, Petko and Lei, Ving Ian and Burnell, Ryan and Bai, Libin and Gulati, Anmol and Tanzer, Garrett and Vincent, Damien and Pan, Zhufeng and Wang, Shibo and others},
  journal={arXiv preprint arXiv:2403.05530},
  year={2024}
}

@article{xu2025qwen3,
  title={Qwen3-Omni Technical Report},
  author={Xu, Jin and Guo, Zhifang and Hu, Hangrui and Chu, Yunfei and Wang, Xiong and He, Jinzheng and Wang, Yuxuan and Shi, Xian and He, Ting and Zhu, Xinfa and others},
  journal={arXiv preprint arXiv:2509.17765},
  year={2025}
}

@article{cui2026minicpm,
  title={MiniCPM-o 4.5: Towards Real-Time Full-Duplex Omni-Modal Interaction},
  author={Cui, Junbo and Xu, Bokai and Wang, Chongyi and Yu, Tianyu and Sun, Weiyue and Xu, Yingjing and Wang, Tianran and He, Zhihui and Ma, Wenshuo and Cai, Tianchi and others},
  journal={arXiv preprint arXiv:2604.27393},
  year={2026}
}

@article{xie2024mini,
  title={Mini-Omni2: Towards Open-Source GPT-4o with Vision, Speech and Duplex Capabilities},
  author={Xie, Zhifei and Wu, Changqiao},
  journal={arXiv preprint arXiv:2410.11190},
  year={2024}
}

@article{li2024baichuan,
  title={Baichuan-Omni Technical Report},
  author={Li, Yadong and Sun, Haoze and Lin, Mingan and Li, Tianpeng and Dong, Guosheng and Zhang, Tao and Ding, Bowen and Song, Wei and Cheng, Zhenglin and Huo, Yuqi and others},
  journal={arXiv preprint arXiv:2410.08565},
  year={2024}
}

@inproceedings{tan2024judgebench,
  title={JudgeBench: A Benchmark for Evaluating LLM-Based Judges},
  author={Tan, Sijun and Zhuang, Siyuan and Montgomery, Kyle and Tang, William Y and Cuadron, Alejandro and Wang, Chenguang and Popa, Raluca Ada and Stoica, Ion},
  booktitle={The Thirteenth International Conference on Learning Representations},
  year={2025}
}

@article{son2024mm,
  title={MM-Eval: A Multilingual Meta-Evaluation Benchmark for LLM-as-a-Judge and Reward Models},
  author={Son, Guijin and Yoon, Dongkeun and Suk, Juyoung and Aula-Blasco, Javier and Aslan, Mano and Kim, Vu Trong and Islam, Shayekh Bin and Prats-Cristi{\`a}, Jaume and Tormo-Ba{\~n}uelos, Luc{\'\i}a and Kim, Seungone},
  journal={arXiv preprint arXiv:2410.17578},
  year={2024}
}
